\documentclass{article}
\usepackage[T1]{fontenc}

\usepackage{amsmath,mathtools,bm,amssymb,amsfonts,dsfont}
\usepackage{amsthm}

\usepackage[ruled,vlined,linesnumbered]{algorithm2e}

\SetKwInOut{Input}{Initialization}
\SetKwInOut{Output}{Output}
\SetArgSty{upshape}

\usepackage[dvipsnames]{xcolor} 

\definecolor{linkBlue}{HTML}{2574A9} 

\usepackage{xspace}
\usepackage[
  backend=biber,
  style=alphabetic,
  natbib=true
]{biblatex}

\usepackage{thm-restate}
\usepackage{upgreek}
\usepackage
[
    bookmarks = true,                 
    bookmarksopen = false,            
    bookmarksnumbered = true,         
    pdfstartpage = 1,                 
    colorlinks = true,                
    allcolors = linkBlue,              
]
{hyperref}
\usepackage[nameinlink]{cleveref}

\newtheorem{theorem}{Theorem}

\newtheorem{definition}{Definition}

\crefalias{lem}{Lemma}

\crefname{case}{case}{cases}
\crefname{algocf}{algorithm}{algorithms}
\crefname{assump}{assumption}{assumptions}
\crefname{condition}{condition}{conditions}

\let\originalleft\left
\let\originalright\right
\renewcommand{\left}{\mathopen{}\mathclose\bgroup\originalleft}
\renewcommand{\right}{\aftergroup\egroup\originalright}

\makeatletter
\def\NAT@spacechar{~}
\makeatother

\makeatletter
    \def\IfEmptyTF#1%
    {%
        \if\relax\detokenize{#1}\relax%
            \expandafter\@firstoftwo%
        \else%
            \expandafter\@secondoftwo%
        \fi%
    }
\makeatother

\newcommand{\ie}{i.\,e.\xspace}
\newcommand{\eg}{e.\,g.\xspace}
\newcommand{\ONEMAX}{\textsc{OneMax}\xspace}
\newcommand{\OneMax}{\ONEMAX}
\newcommand{\BV}{\textsc{BinVal}\xspace}
\newcommand{\LO}{\textsc{LeadingOnes}\xspace}
\newcommand{\ronemax}{\ensuremath{r}\text{-}\textsc{OneMax}\xspace}
\newcommand{\rLO}{\ensuremath{r}\text{-}\textsc{LeadingOnes}\xspace}
\newcommand{\gonemax}{\text{G\nobreakdash-}\textsc{OneMax}\xspace}
\newcommand{\cga}{cGA\xspace}
\newcommand{\rcga}{$r$-cGA\xspace} 
\newcommand{\rumda}{$r$\nobreakdash-UMDA\xspace}

\newcommand{\globalOneMax}{\textsc{GOM}\xspace}

\newcommand{\cdrift}{c_{\textup{drift}}\xspace}
\newcommand{\cstar}{c^*\xspace}

\newcommand{\cphase}{c_{\textup{phase}}\xspace}
\newcommand{\cstop}{c_{\textup{stop}}\xspace}
\newcommand{\tstop}{t_{\textup{stop}}\xspace}

\newcommand{\xmin}{x_{\min}}

\DeclareDocumentCommand{\freq}{m m m}{%
    \boldsymbol{p}%
    \IfEmptyTF{#1}{}{^{(#1)}}%
    \IfEmptyTF{#2}{}{_{#2\IfEmptyTF{#3}{}{, #3}}}
}
\DeclareDocumentCommand{\freqTwo}{m m m}{%
    \boldsymbol{q}%
    \IfEmptyTF{#1}{}{^{(#1)}}%
    \IfEmptyTF{#2}{}{_{#2\IfEmptyTF{#3}{}{, #3}}}
}
\DeclareDocumentCommand{\filt}{m}{%
    \mathcal{F}%
    \IfEmptyTF{#1}{}{_{#1}}
}
\DeclareDocumentCommand{\filtTwo}{m}{%
    \mathcal{G}%
    \IfEmptyTF{#1}{}{_{#1}}
}
\DeclareDocumentCommand{\sample}{m m m}{%
    \boldsymbol{x}%
    ^{(\IfEmptyTF{#1}{}{#1, } #2)}%
    \IfEmptyTF{#3}{}{_{#3}}
}
\DeclareDocumentCommand{\selectedSample}{m m m}{%
    \boldsymbol{y}%
    ^{(\IfEmptyTF{#1}{}{#1, } #2)}%
    \IfEmptyTF{#3}{}{_{#3}}
}
\DeclareDocumentCommand{\sampleTilde}{m m m}{%
    \widetilde{\boldsymbol{x}}%
    ^{(\IfEmptyTF{#1}{}{#1, } #2)}%
    \IfEmptyTF{#3}{}{_{#3}}
}
\DeclareDocumentCommand{\sumOfSampleValues}{m m m}{%
    X%
    ^{(\IfEmptyTF{#1}{}{#1, } #2)}%
    \IfEmptyTF{#3}{}{_{\smallsetminus #3}}
}
\DeclareDocumentCommand{\differenceOfSampleValues}{m m}{%
    D%
    \IfEmptyTF{#1}{}{^{(#1)}}%
    \IfEmptyTF{#2}{}{_{#2}}
}
\DeclareDocumentCommand{\eulerE}{o}{%
    \mathrm{e}\IfEmptyTF{#1}{}{^{#1}}
}
\DeclareDocumentCommand{\randomWalkStoppingTime}{m m o}{%
    R^{#1}_{#2\IfEmptyTF{#3}{}{, #3}}
}
\DeclareDocumentCommand{\biasedStoppingTime}{m m o}{%
    \overline{R}^{#1}_{#2\IfEmptyTF{#3}{}{, #3}}
}

\newcommand*{\jStar}{j^\star}
\newcommand{\sDstar}{\mu_i^{(t')}(S_{\kappa+1})}
\newcommand{\sDstarP}[1]{\mu_i^{(#1)}(S_{\kappa+1})}
\newcommand{\sTstar}[1]{\mu_i^{(#1)}(S_{\kappa+2})}

\DeclareMathOperator{\Var}{Var}

\renewcommand{\epsilon}{\varepsilon}
\newcommand{\R}{\mathds{R}}
\newcommand{\N}{\mathds{N}}

\newcommand{\poly}{\mathrm{poly}}
\DeclarePairedDelimiter{\floor}{\lfloor}{\rfloor}
\DeclarePairedDelimiter{\ceil}{\lceil}{\rceil}
\DeclarePairedDelimiter{\abs}{\lvert}{\rvert}
\DeclarePairedDelimiter{\card}{\lvert}{\rvert}
\DeclarePairedDelimiterXPP{\ones}[1]{}{\lvert}{\rvert}{_1}{#1}
\DeclarePairedDelimiterXPP{\zeros}[1]{}{\lvert}{\rvert}{_0}{#1}
\DeclarePairedDelimiterXPP{\zerosBlock}[2]{}{\lvert}{\rvert}{_{0, #2}}{#1}
\DeclarePairedDelimiterXPP{\hammingDist}[2]{d_{\mathrm{H}}}{(}{)}{}{#1, #2}
\providecommand{\given}{} 
\newcommand{\verticalBar}[1][]{%
    \nonscript\;#1\vert
    \allowbreak
    \nonscript\;
    \mathopen{}
}
\NewDocumentCommand{\DefineOperatorWithBar}{mmmm}{
    \DeclarePairedDelimiterXPP{#1}[1]{#2}{#3}{#4}{}{%
        \renewcommand{\given}{\verticalBar[\delimsize]}
        ##1
    }
}
\DefineOperatorWithBar{\set}{}{\{}{\}}
\DefineOperatorWithBar{\prob}{\Pr}{[}{]}
\DefineOperatorWithBar{\expect}{\mathrm{E}}{[}{]}
\DefineOperatorWithBar{\var}{\mathrm{Var}}{[}{]}
\DefineOperatorWithBar{\indic}{\mathds{1}}{\{}{\}}

\DeclarePairedDelimiterXPP{\bigO}[1]{\mathrm{O}}{(}{)}{}{#1}
\DeclarePairedDelimiterXPP{\bigOmega}[1]{\Upomega}{(}{)}{}{#1}
\DeclarePairedDelimiterXPP{\bigTheta}[1]{\Uptheta}{(}{)}{}{#1}
\DeclarePairedDelimiterXPP{\smallO}[1]{\mathrm{o}}{(}{)}{}{#1}
\DeclarePairedDelimiterXPP{\smallOmega}[1]{\upomega}{(}{)}{}{#1}

\newcommand{\ignore}[1]{}

\begin{document}

\title{\textbf{Runtime Analysis of a Compact Genetic Algorithm on a Truly Multi-valued OneMax Function}}

\author{Martin~S. Krejca$^1$ \and Carsten Witt$^2$}
\date{\small
    $1$ Laboratoire d'Informatique (LIX), CNRS, École Polytechnique, IP Paris, France\\\texttt{martin.krejca@polytechnique.edu}\\
    $2$ DTU Compute, Technical University of Denmark, Denmark\\\texttt{cawi@dtu.dtk}
}

\maketitle

\begin{abstract}
    Recently, the runtime analysis of multi-valued estimation-of-distribution algorithms in the framework of Ben Jedidia et al.~(TCS~2024) has made
    significant advancements. However, almost all existing analyses are limited
    to multi-valued objective functions that in each dimension only distinguish between two types,
    also called categories, of values and hence
    can be treated with similar methods as pseudo-Boolean problems. Only recently,
    Adak and Witt~(GECCO~2025) have presented a first runtime analysis of
    a multi-valued compact genetic algorithm (cGA) on the multi-valued OneMax
    function G\nobreakdash--OneMax$\colon \{0,\dots,r-1\}^n \to \N$ defined
    by G\nobreakdash-OneMax$(\bm{x}_1,\dots,\bm{x}_n)=\sum_{i=1}^n \bm{x}_i$ and truly
    depending on all~$r$ categories.
    We improve their runtime result from $\bigO{n r^3 \log^2( n)\log (r)}$ to
    $\bigO{n r \log^3(n)\log^3(r)}$, both for an optimal
    choice of the update strength~$K$. Our result matches, up to polylogarithmic factors, the existing bound for the
    simpler $r$-valued OneMax function depending essentially only on two
    values and analyzed in several previous works. To show the new bound, we
    use improved drift theorems for processes with high self-loop probabilities
    and specifically derived concentration inequalities to analyze how probability mass in the multi-valued cGA moves into successively smaller and smaller intervals of the $r$-valued
    frequency matrix.\\

    \noindent\textbf{Keywords.} Estimation-of-distribution algorithms $\textbullet$ multi-valued search spaces $\textbullet$ runtime analysis.
\end{abstract}

\section{Introduction}
Recently, there has been growing interest in the runtime analysis of estimation-of-distribution algorithms (EDAs) on multi-valued search spaces. Classic EDAs work on the search space $\{0,1\}^n$ of bit
strings of length~$n$, which is a simple and flexible representation.
However, several combinatorial problems, such as routing and permutation
problems, would not naturally be modeled as bit strings. To allow
a more natural representation, recent studies have focused on the
search space $R^n\coloneqq \{0,1,\dots,r-1\}^n$, \ie, where each component of the $n$-dimensional
vector takes one of~$r$ different values. The domain size~$r$ can grow
with the problem dimension~$n$. 

The first runtime analysis of an evolutionary algorithm (EA) for the search space $R^n$ appeared 15 years ago \citep{DoerrEtAlFOGA11} and since then,
several studies of EAs have followed \citep{doerr2012run,doerr2013evolutionary,kotzing2015, lissovoi2014mmas,doerr2018static,Doerr_AAAI_2025}.
However,
only recently have runtime analyzes of EDAs for the multi-valued
search space~$R^n$ appeared. These analyses started with
\citet{BenJedidiaDK24} laying a framework for multi-valued variants
of classic univariate EDAs like the compact genetic algorithm (\cga~\cite{HarikLG99cGA}), univariate marginal distribution algorithm (UMDA~\cite{MuehlenbeinP96UMDA}), and others. Their
work was also the first to conduct a runtime analysis of
a multi-valued EDA; more precisely, they proved that the multi-valued
UMDA called \rumda optimizes an $r$-valued \LO function in
time $\bigO{ n^2 r \log(n)\log^2(r) }$ with high probability for appropriate
choices of the algorithm parameters. Moreover, they gave a lower bound
of similar order. Subsequently, several studies have considered
a multi-valued variant of the \cga; in particular, \citet{HamanoUSMA25},
independently of \citet{BenJedidiaDK24}, proved runtime bounds
for the multi-valued \cga (called \emph{categorical compact GA}
in their work)
on variants of the well-known
\OneMax and \BV problems, including a bound of $\bigO{nr\log^2(nr)}$ on
generalized \OneMax holding
with high probability for an optimized setting of the
algorithm parameter. Independently of \citet{HamanoUSMA25},
\citet{AdakWittPPSN2024} proved a runtime bound for technically the same
multi-valued
\cga, called \rcga as in the framework of \citet{BenJedidiaDK24},
on the same generalized \OneMax function, albeit
resulting in a weaker bound of $\bigO{nr^2 \log^3 (n)\log^2 (r)}$. In a
follow-up work, \citet{AdakW25} improved this bound to $\bigO{nr \log (n)\log(r)}$, matching the bound $\bigO{n\log n}$ for the binary case $r=2$~\cite{sudholt2019choice}, everywhere for a setting of the algorithm
parameter that minimizes the more general runtime bound obtained
in the work.

The generalizations of \ONEMAX, \LO, and \BV
studied in the above works essentially only distinguish between two
different values per component. For example, the generalized
\ONEMAX is defined as $\ronemax(\bm{x}) \coloneqq
    \sum_{i=1}^n \indic{x_i = r-1}$, \ie, the function only counts the
number of components having the optimal value~$r-1$, while all
other values have the same contribution of~$0$.
Similarly, for the multi-valued \LO function investigated with
respect to the \rumda by \citet{BenJedidiaDK24} and with respect to
the \rcga by \citet{AdakWittEvoCOP25}, the function counts only the
leading number of components having a certain value like~$r-1$ and all other values are treated equivalently. Hence, essentially, when combining
the values not contributing to fitness, the problem reduces to
a binary one. This becomes especially evident in the considered
multi-valued EDAs, which maintain a \emph{frequency matrix} $p_{i,j}$,
where $i\in\{1,\dots,n\}$ and $j\in\{0,\dots,r-1\}$, such that component~$i$ takes value~$j$ with probability $p_{i,j}$ in a sampling
step of the algorithm. All entries of the matrix are initialized
at~$1/r$. Crucially, in all above-mentioned analyses, only
the setting of the frequencies $p_{i,r-1}$ of sampling the optimal
value~$r-1$ at the components
needs to be considered, and it does not matter
how the remaining probability mass
$1-p_{i,r-1} = \sum_{j=0}^{r-2} p_{i,j}$ is distributed on the other
$r-1$~values. Hence, as already observed in \citet{HamanoUSMA25}, the
analysis on the multi-valued $\ronemax$ and the multi-valued \rLO essentially proceeds in the same way as for the classic
binary EDAs, with the only difference
that the probability of sampling the optimal value per position starts
at~$1/r$ and not at~$1/2$. Thus, it is debatable whether these functions are truly multi-valued and provide enough insight into the
stochastic behavior of the probabilistic model of a multi-valued
EDA.

Based on these insights, \citet{AdakWittPPSN2024} considered the
multi-valued function $\gonemax\colon R^n \to\R$ defined
by $\gonemax(\bm{x})=\sum_{i=1}^n \bm{x}_i$ and already
analyzed in the context of EAs \citep[\eg,][]{doerr2018static}. In contrast
to \ronemax, this function
truly depends on all $r$ possible values per component while
still having the all $r-1$-string $(r-1)_{i = 1}^n$ as the optimal
solution. They only gave empirical evidence of that the \rcga
has a similar performance on $\gonemax$ as on the quasi-binary
\ronemax. \citet{AdakW25} then in a follow-up work
provided the first runtime bound
for the \rcga on \gonemax of $\bigO{nr^3 \log^2(n)\log(r)}$ for optimized
settings of the algorithm parameter, which constitutes a significantly
higher bound than the $\bigO{nr \log (n)\log (r)}$ obtained for
the \ronemax function.

\textbf{Our contribution.} We revisit the runtime
analysis of the \rcga on the truly multi-valued \gonemax
function and prove with high probability a runtime bound of $\bigO{nr\log^3 (n)\log^3 (r)}$
if $r=\bigO{n^{1/6-\varepsilon}}$ for
an arbitrarily small constant~$\varepsilon>0$ (\Cref{thm:rcga-on-gom}). This
improves the state of the art by a factor of $\bigTheta{r^2/(\log(n)\log^2( r))}$ for the considered range of~$r$.
To achieve these results, we provide new analytical arguments
in two respects:

(1)~A closer
inspection of genetic drift compared to \citet{AdakW25}. More precisely,
they pessimistically ignored that a frequency only changes with
probability at most~$1/r$ right after initialization, \ie, the
frequency has a high probability of staying put. By using
a genetic-drift theorem allowing for such self-loops of the
stochastic process (\Cref{thm:rcgaNeutralConcentration}), adapted from \citet{BenJedidiaDK24}, who used
a similar technique, we save a factor of~$\bigTheta{r}$ in the runtime
bound.

(2)~A fine-grained analysis of the growth of frequencies, considering
all possible $r-1$ values of the spectrum. More precisely, we show
that for each position~$i\in\{1,\dots,n\}$, the probability distribution
$p_{i,\cdot}$ on the $r-1$ different
values quickly concentrates on smaller and smaller
intervals of $\{0,\dots,r-1\}$. While it is uniform to begin with,
after time $\bigO{nr\log^3(n)\log^2 (r)}$ with high probability, roughly speaking, all mass has moved
to an upper interval $\{r/2,\dots,r-1\}$ of half the original size,
and the probability mass~$p_{i,j}$ is essentially monotonically
increasing in~$j$, as
our intuition suggests, since sampling a higher value gives a higher fitness. Iterating this halving argument $\bigO{\log r}$ times gives
the final runtime bound.

To make this very rough idea formal, several obstacles have to
be overcome. First of all, genetic drift may lead to minor distortions
of the monotonicity hoped for. Here we again use the strong genetic
drift theorems mentioned above to limit the error.
More severely, even if a position
is currently not exposed to genetic drift, the algorithm may be unlucky
and more often than expected sample relatively small values within the target interval (like $\{r/2,\dots,r-1\}$)
that the mass is moved into. This has an undesired self-reinforcing effect
since if a value is sampled and its frequency is updated, the following
steps give more preference to that value, and the monotonicity
of the frequencies $p_{i,j}$ on the values does not hold any longer. However,
for a sufficiently large choice of the update parameter~$K$ of the
\rcga, averaging effects limit this undesired reinforcement. We make
this formal by deriving a new Chernoff bound for processes with
time-dependent and thereby no longer independent success probabilities (\Cref{thm:success-upper-parts}).

Further technical challenges are solved by decomposing the process
into genetic-drift steps and steps where a frequency experiences
a fitness signal (so-called biased steps), and a
refined analysis of the probability
of a biased step. Unlike previous analyses, the refined analysis
allows us to classify
more steps as biased, including situations
where the search points of the \rcga do not sample exactly the same
fitness in $n-1$ positions. Here a useful and novel auxiliary lemma (\Cref{lem:probabilityShiftingLowerBound})
dealing with the convolution of probability functions comes into play.
Taken together, these tools avoid the pessimistic assumption
of a sampling variance of $\bigTheta{nr}$ that \citet{AdakW25} made
in their initial analysis. Here we save the second $\bigTheta{r}$
factor in the runtime bound (only incurring
new factors being polylogarithmic in~$r$).

\textbf{Limitation.}
Our current proof relies heavily on the property that the relative difference of probability mass among well-chosen blocks of values remains almost the same throughout the run (\Cref{lem:adequate-biased-growth}).
We prove this property via our concentration result for random processes with changing success probabilities (\Cref{thm:success-upper-parts}).
However, since some of the blocks are small and thus initially only in the order of~$\frac{1}{r}$, we only obtain a desired concentration for values of $r = \bigO{n^{1/6-\varepsilon}}$ for an arbitrarily small constant~$\varepsilon > 0$.

\textbf{Outline.}
We introduce our notation, setting, and mathematical tools in \Cref{sec:preliminaries}.
In \Cref{sec:main-result}, we present our main result (\Cref{thm:rcga-on-gom}) and discuss the outline of its proof in different levels of details, introducing various notation.
In \Cref{sec:building-blocks}, we present the ideas discussed in \Cref{sec:main-result} in rigorous detail and conclude with a proof sketch of our main result.
We conclude our paper in \Cref{sec:conclusion}.

All proofs are found in the appendix (\Cref{sec:appendix}).

\section{Preliminaries}
\label{sec:preliminaries}

We first define the setting we consider, including the $\gonemax$ fitness function that we study.
Afterward, we present the \rcga, which is the algorithm that we analyze.
Last, we present several mathematical tools that we use in our analysis.

Throughout this article, we let~$\N$ denote the set of all natural numbers, including~$0$, and we let~$\R$ denote the set of all reals.
For all $a, b \in \R$, we let $[a .. b] \coloneqq [a, b] \cap \N$ as well as $[b] \coloneqq [1 .. b]$.
Moreover, given a random process $(X_t)_{t \in \N}$, we refer to the process simply as~$X$ for short.

\subsection{The $r$-Valued Optimization Setting}

Let $n, r \in \N_{\geq 2}$.
We consider the maximization of $r$-valued \emph{fitness} functions, \ie, of functions $f\colon [0 .. r - 1]^n \to \R$.
We call each $\bm{x} \in [0 .. r - 1]^n$ an \emph{individual}, and we call~$f(\bm{x})$ the \emph{fitness of~$\bm{x}$}.
For each $i \in [n]$, we denote component~$i$ of~$\bm{x}$ by~$\bm{x}_i$.

We analyze the $\gonemax$ (\globalOneMax) fitness function~\cite{AdakWittPPSN2024}, which is the sum of all the values in the individuals.
Formally, we have $\globalOneMax\colon \bm{x} \mapsto \textstyle\sum\nolimits_{i \in [n]} \bm{x}_i$.

\subsection{The $r$-Valued Compact Genetic Algorithm}

The \emph{$r$-valued compact genetic algorithm} (\rcga) was introduced by \citet{BenJedidiaDK24} as an extension of the \emph{compact genetic algorithm} (cGA) by \citet{HarikLG99cGA}, for the binary domain.
It aims at maximizing an $r$-valued fitness function of dimension $n \in \N_{\geq 2}$.

\begin{algorithm}[t]
    \caption{$r$-valued Compact Genetic Algorithm (\rcga) for the maximization of $f\colon [0 .. r-1]^n \rightarrow \R$}
    \label{algorithm:r-cGA-rOneMax}
    $t \gets 0$\;
    \lFor{$(i,j) \in [n] \times [0 .. r-1]$}{%
        $p^{(t)}_{i,j} \gets \frac{1}{r}$%
    }
    \While{termination criterion not met}{
        $\sample{t}{1}{i}, \sample{t}{2}{} \gets$ two individuals independently sampled from~$\freq{t}{}{}$\;
        $\selectedSample{t}{1}{} \gets$ individual with the higher fitness from $\set{\sample{t}{1}{}, \sample{t}{2}{}}$ (breaking ties uniformly at random)\;
        $\selectedSample{t}{2}{} \gets$ individual from $\set{\sample{t}{1}{}, \sample{t}{2}{}} \smallsetminus \set{\selectedSample{t}{1}{}}$\;
        \For{$i\in [n]$}{
            \For{$j\in [0 .. r-1]$}{
                $\freq{t + 1}{i}{j} \gets \freq{t}{i}{j} + \frac{1}{K} (\indic{\selectedSample{t}{1}{i} = j} - \indic{\selectedSample{t}{2}{i} = j})$\;
            }
        }
        $t\gets t + 1$\;
    }
\end{algorithm}

The \rcga (\Cref{algorithm:r-cGA-rOneMax}) maintains a probability matrix $(\freq{}{i}{j})_{i \in [n], j \in [0 .. r - 1]}$ (the \emph{frequency matrix}), where each row sums to~$1$ and represents a probability distribution over $[0 .. r - 1]$.
Initially, each row represents a uniform distribution.
In each iteration, each row is used independently in order to create two samples such that for each $i \in [n]$ and each $j \in [0 .. r - 1]$, \emph{frequency}~$\freq{}{i}{j}$ denotes the probability to create value~$j$ at position~$i$.
Formally, for all random samples $\bm{y} \in [0 .. r - 1]^n$ and all (deterministic) individuals $\bm{x} \in [0 .. r - 1]^n$, defining $0^0 \coloneqq 1$, we have
\begin{equation*}
    \prob{\bm{y} = \bm{x} \given \freq{}{}{}}
    = \textstyle\prod\nolimits_{i \in [n]} \textstyle\prod\nolimits_{j \in [0 .. r - 1]} (\freq{}{i}{j})^{\indic{\bm{y}_i = j}} .
\end{equation*}

Given two samples, the \rcga updates each row $i \in [n]$ of the frequency matrix by increasing the frequency of the value at position~$i$ of the better sample, and reducing the frequency of the value of the worse sample by the same amount.
This amount is determined by the algorithm parameter $K \in \N_{\geq 1}$, known as the \emph{hypothetical population size}.
In order to not introduce mathematical border cases, we augment the terminology introduced by \citet{Doerr21cGAonJump} and say that~$K$ is \emph{well-behaved} if and only if~$K$ is divisible by~$r$.
This ensures that updates of~$\frac{1}{K}$ to frequencies may exactly reach the extreme values~$0$ and~$1$, and are not cut off.

We note that the \rcga by \citet{BenJedidiaDK24} is more general and allows to restrict the frequency values to a closed subset of $[0, 1]$ in order to prevent the frequencies from reaching the extreme values~$0$ and~$1$.
We do not consider such non-trivial borders, as they make the already challenging analysis even more challenging.
However, we believe that our arguments carry over to the setting of non-trivial borders, although with sufficiently additional effort.

\subsection{Mathematical Tools}

We make use of the following theorems, each dealing with a different aspect.

The first theorem provides us with strong concentration bounds for martingales, based on the variance of the process.
Since the frequencies of the \rcga all start at~$\frac{1}{r}$, their initial variance is small.
Hence, such a theorem provides us with stronger bounds than ones that do not incorporate the variance.

\begin{theorem}[{\cite[Theorem~$3.15$]{McDiarmid98}}]
    \label{thm:martingaleConcentration}
    Let $(X_{t})_{t \in \N}$ be a bounded martingale with respect to a filtration $(\mathcal{F}_t)_{t \in \N}$.
    Further, for all $t \in \N_{\geq 1}$,  denote the deviation by $\mathrm{dev}_{t} \coloneqq |X_t - X_{t - 1}|$.
    In addition, let $b=\sup_{t \in \N} \mathrm{dev}_{t}$, and assume that~$b$ is finite.
    Last, for all $t \in \N$, let $\hat{v}_t = \sup \sum_{s \in [t]} \Var[X_s - X_{s -1} \mid \mathcal{F}_{s - 1}]$.
    Then for all $t \in \N$ and all $\varepsilon \in \R_{\geq 0}$, we have
    \begin{equation*}
        \prob*{\exists s \in [0 .. t]\colon \abs*{X_t - \expect{X_0}} \geq \varepsilon} \leq 2 \exp\left(-\frac{\varepsilon^{2}}{2\hat{v}_t + 2b\varepsilon/3}\right).
    \end{equation*}
\end{theorem}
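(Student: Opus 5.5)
This is the martingale version of Bernstein's (Freedman's) inequality, cited from McDiarmid, so I would follow the standard exponential-supermartingale argument. Write $D_s \coloneqq X_s - X_{s-1}$. Since the filtration is the natural setting, $\expect{X_0}$ is replaced by $X_0$ when $X_0$ is deterministic; otherwise one first centres at $X_0$ and handles the initial deviation separately. I would also read the event as the maximal deviation $\abs{X_s - \expect{X_0}}$, with the index $s$ rather than $t$.

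\textbf{Conditional moment generating function.} Fix $\lambda > 0$ and use the elementary inequality $\eulerE[x] \le 1 + x + x^2 g(b\lambda)$ for $x \le b\lambda$. Here $g(u) \coloneqq (\eulerE[u] - 1 - u)/u^2$, which is increasing in $u$. Since $\expect{D_s \given \filt{s-1}} = 0$ and $\abs{D_s} \le b$, this gives
\begin{equation*}
    \expect{\eulerE[\lambda D_s] \given \filt{s-1}} \le 1 + \lambda^2 g(b\lambda) V_s \le \exp\bigl(\lambda^2 g(b\lambda) V_s\bigr),
\end{equation*}
where $V_s \coloneqq \Var[D_s \mid \filt{s-1}]$. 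Hence $M_s \coloneqq \exp\bigl(\lambda (X_s - X_0) - \lambda^2 g(b\lambda) \sum_{k \le s} V_k\bigr)$ is a nonnegative supermartingale.

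\textbf{Maximal inequality.} Because $\hat v_t$ is the supremum of the accumulated variance, for every $s \le t$ we have $\sum_{k \le s} V_k \le \hat v_t$ almost surely. Applying Ville's (Doob's) maximal inequality to $M$ yields
\begin{equation*}
    \prob{\exists s \le t \colon X_s - X_0 \ge \varepsilon} \le \exp\bigl(-\lambda\varepsilon + \lambda^2 g(b\lambda)\hat v_t\bigr).
\end{equation*}

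\textbf{Optimisation over $\lambda$.} Use the bound $g(u) \le \frac{1}{2(1 - u/3)}$ for $0 \le u < 3$, which follows from comparing Taylor series since $k! \ge 2 \cdot 3^{k-2}$. Then choose
\begin{equation*}
    \lambda = \frac{\varepsilon}{\hat v_t + b\varepsilon/3}.
\end{equation*}
A short calculation gives exactly the exponent $-\varepsilon^2 / \bigl(2\hat v_t + 2b\varepsilon/3\bigr)$.

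\textbf{Two-sided bound.} Apply the same argument to $-X$ and take a union bound, which produces the factor $2$.

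The main obstacle is technical bookkeeping rather than an idea. One step is verifying the Taylor bound on $g$ so that the constants come out exactly as stated. The other is being careful that $\hat v_t$ bounds the variance sum uniformly over all $s \le t$, which is what allows the stopping-time/maximal version rather than only a bound at the fixed time $t$.
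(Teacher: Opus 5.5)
Your argument is correct and is the standard proof of this Bernstein/Freedman-type inequality; the paper gives no proof of its own and simply cites McDiarmid. The steps all check out: the conditional moment bound via the increasing function $g$, the estimate $g(u)\le 1/(2(1-u/3))$ from $k!\ge 2\cdot 3^{k-2}$, the exponential supermartingale with Ville's inequality for the maximal form, and the choice $\lambda=\varepsilon/(\hat v_t+b\varepsilon/3)$, which gives exactly the stated exponent.

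One remark in your setup does not work. The initial deviation $X_0-\mathrm{E}[X_0]$ cannot be ``handled separately'', because neither $b$ nor $\hat v_t$ controls it; both involve only increments. In fact, the statement with $\mathrm{E}[X_0]$ is false for random $X_0$. Let $X_0$ be uniform on $\{-L,L\}$, and let $X_s=X_0+\xi$ for all $s\ge 1$ with an independent uniform sign $\xi$. Then $b=\hat v_1=1$, and for $t=1$ and $\varepsilon=L$ the event has probability~$1$. The right-hand side $2\exp\bigl(-L^2/(2+2L/3)\bigr)$, however, tends to~$0$.

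So the theorem must be read with $X_0$ in place of $\mathrm{E}[X_0]$ (trivial $\mathcal{F}_0$), or conditionally on $\mathcal{F}_0$. Your argument gives the conditional form directly: $M_0=1$, Ville's inequality holds conditionally on $\mathcal{F}_0$, and $\lambda$ (as well as $\varepsilon$) may be chosen $\mathcal{F}_0$-measurably. That conditional form is what the paper actually uses in \Cref{thm:rcgaNeutralConcentration}. There it conditions on $P_{U_0}$, measures deviation from $P_{U_0}$, and both $\varepsilon=\alpha P_{U_0}$ and the variance bound (through $\beta$) depend on $P_{U_0}$.

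A last technicality concerns the case $\hat v_t=0$ with $b>0$. Your $\lambda$ then gives $b\lambda=3$, where the bound on $g$ degenerates. Treat this case directly: all increments then vanish almost surely, so the left-hand side is $0$ for $\varepsilon>0$.
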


The next theorem is the runtime result by \citet{AdakW25}, which we use in order to cover some easy cases in our analysis.

\begin{theorem}[{\cite[Theorem~$4.6$]{AdakW25}}]
    \label{thm:rcga-runtime-adak-witt}
    Let $r \in \N_{\geq 2}$ with $r = \poly(n)$, and let $\cstar \in \R_{> 0}$ be a sufficiently large constant.
    Consider the \rcga optimizing \globalOneMax, with a well-behaved $K \geq \cstar r^2 \sqrt{n} \ln n$ with $K = \poly(n)$.
    Then with probability at least $1 - \bigO{\frac{1}{n}}$, the \rcga finds the global optimum after $\bigO[\big]{K r \sqrt{n}(\log r + \log K)}$ iterations and function evaluations.
\end{theorem}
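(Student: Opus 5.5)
The plan is to follow the classical drift-plus-genetic-drift analysis of the binary cGA on \OneMax. We work with the per-position expected values $\mu_i^{(t)} \coloneqq \sum_{j} j\,\freq{t}{i}{j}$ and the potential $\Phi_t \coloneqq \sum_{i \in [n]} (r-1-\mu_i^{(t)})$, which is~$0$ if and only if the model is concentrated on the optimum. The update of \Cref{algorithm:r-cGA-rOneMax} changes $\mu_i$ by exactly $(\selectedSample{t}{1}{i}-\selectedSample{t}{2}{i})/K$. We decompose each such change into a martingale part (genetic drift) and a biased part, which comes from the selection comparing the full sums $\globalOneMax(\sample{t}{1}{})$ and $\globalOneMax(\sample{t}{2}{})$.

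\textbf{Step 1 (no harmful genetic drift).} For a horizon $T = \bigO{K r\sqrt{n}(\log r+\log K)}$, we show that with probability $1-\bigO{1/n^2}$ per position, no $\mu_i$ falls far below its initial value $(r-1)/2$. In particular, no high value loses its entire mass, since frequencies of~$0$ are absorbing. We apply \Cref{thm:martingaleConcentration} to the martingale part of $\mu_i$. It has step size $b \le r/K$, and its variance per step is at most $\bigO{r^2/K^2}$ times the probability that position~$i$ is sampled differently in the two samples. Summed over~$T$ steps, this gives $\hat v_T = \bigO{T r^2/K^2}$. The condition $K \ge \cstar r^2\sqrt n\ln n$ makes $\sqrt{\hat v_T\ln n} = \smallO{r}$, so the deviation stays a small fraction of~$r$; a union bound handles all~$n$ positions. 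The biased part is nonnegative in expectation, because a larger value at~$i$ never hurts the fitness, so it only helps.

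\textbf{Step 2 (drift of $\Phi$).} Conditioned on the event of Step~1, we lower-bound $\expect{\Phi_t-\Phi_{t+1} \given \filt{t}}$. Writing $D = \globalOneMax(\sample{t}{1}{})-\globalOneMax(\sample{t}{2}{})$, we have $\expect{\Phi_t-\Phi_{t+1}} = \expect{\abs{D}}/K$. Here $D$ is a sum of~$n$ independent symmetric differences whose total variance is $2\sum_i \sigma_i^2$, where $\sigma_i^2$ is the variance of the distribution $\freq{t}{i}{\cdot}$. An anti-concentration argument (a Berry--Esseen or fourth-moment bound) gives $\expect{\abs{D}} = \bigOmega{\sqrt{\sum_i\sigma_i^2}}$ as long as $\sum_i \sigma_i^2$ is not tiny. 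We then relate $\sigma_i^2$ to $r-1-\mu_i$, which is the step I expect to be the main obstacle. Values are bounded by~$r$, and the mass on $r-1$ does not vanish by Step~1, so any mass below $r-1$ contributes variance. Pessimistically, $\sigma_i^2 \ge c\,(r-1-\mu_i)/r$, which gives drift $\bigOmega{\sqrt{\Phi_t/r}}/K$. This pessimistic variance bound is exactly where the extra factors of~$r$ of this result arise.

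\textbf{Step 3 (drift theorem and finish).} A variable drift theorem with $h(\phi) = \sqrt{\phi/r}/K$, started from $\Phi_0 \le nr$, yields an expected hitting time of $\bigO{K\sqrt r\sqrt{nr}} = \bigO{K r\sqrt n}$ for $\Phi$ to become small. Tail bounds for drift (or a restart argument over $\bigO{\log r + \log K}$ phases, accounting for the granularity $1/K$ of the frequencies and for the final phase where $\Phi < 1$) give the factor $\log r+\log K$ and failure probability $\bigO{1/n}$. In the final phase, once $\Phi<1/K$, all mass sits on $r-1$, the model is deterministic, and the optimum is sampled in the next iteration. A union bound over the failure events of Steps~1--3 then completes the proof.
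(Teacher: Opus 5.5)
The paper gives no proof of this statement. It is quoted from Adak and Witt (their Theorem~4.6) and used only as a black box, for constant $r$ and for the last constantly many phases. Judged on its own, your proposal has a genuine gap where Steps~1 and~2 meet.

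\textbf{Step~1 controls the wrong quantity.} Concentration of $\mu_i$ around $(r-1)/2$ on the scale of~$r$ says nothing about the frequency $p_{i,r-1}$ of the optimal value. Moving mass from $r-1$ to $r-2$ changes $\mu_i$ only by that mass. So $p_{i,r-1}$ can reach the absorbing value~$0$ while $\mu_i$ stays near $(r-1)/2$, and ``in particular, no high value loses its entire mass'' does not follow. Yet Step~2 needs exactly this, and quantitatively. Write $d_i = r-1-\mu_i$. Your bound $\sigma_i^2 \ge c\,d_i/r$ comes from $\sigma_i^2 \ge d_i(1-d_i)$ for small $d_i$ and from $\sigma_i^2 \ge p_{i,r-1}\,d_i^2$ otherwise. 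It therefore requires $p_{i,r-1} = \Omega(1/r)$ for every~$i$ over the whole horizon. A row with all its mass on $r-2$ has $\sigma_i^2 = 0$ and $d_i = 1$, and it makes the optimum unreachable.

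\textbf{Repairing this is the real work,} more so than the variance relation you flagged as the main obstacle. It is true that $p_{i,r-1}$ is a submartingale on \globalOneMax: a sample with value $r-1$ at position~$i$ wins at least half of its comparisons against one with a smaller value there. But your variance-only argument fails for it under the stated hypothesis. The per-step variance is about $2p_{i,r-1}/K^2 \approx 2/(rK^2)$. Over your horizon $T = \Theta(Kr\sqrt{n}\log(rK))$, with $K = c^* r^2\sqrt{n}\ln n$ at the lower end, the accumulated standard deviation is of order $1/r$. That is the same order as $p_{i,r-1}$ itself. Bernstein's inequality then only yields a failure bound of the form $\mathrm{e}^{-\Theta(c^*)}$ per position, which cannot be union-bounded over $n$ positions. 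You would have to exploit the positive bias of $p_{i,r-1}$ quantitatively. Alternatively, you would have to confine the period in which $p_{i,r-1}$ is of order $1/r$ to $O(Kr\sqrt{n})$ steps.

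\textbf{A secondary problem sits in Steps~2--3.} The function $h(\phi) = \sqrt{\phi/r}/K$ is not a valid drift bound for small $\Phi$. Take one row with $p_{k,r-1} = 1/(2r)$ and its remaining mass on $r-2$, and all other rows at $r-1$. Then $\Phi \approx 1$ but $\mathrm{E}[\lvert D\rvert] \approx 1/r$. More generally, whenever $\sum_i \sigma_i^2 \lesssim r^2$, neither Berry--Esseen nor fourth moments give $\mathrm{E}[\lvert D\rvert] = \Omega(\sqrt{\sum_i\sigma_i^2})$. The endgame needs a separate multiplicative-drift argument. For example, $\mathrm{E}[\lvert D\rvert] = \Omega(\Phi/r)$ for $\Phi \le r$ follows from $\mathrm{E}[\lvert x^{(1)}_k - x^{(2)}_k\rvert] \ge 2p_{k,r-1}d_k$. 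That fits within the claimed time, but it again rests on the missing lower bound for $p_{k,r-1}$.

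Your identity $\mathrm{E}[\Phi_t - \Phi_{t+1} \mid \cdot] = \mathrm{E}[\lvert D\rvert]/K$ is correct and a nice global handle.
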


The last theorem follows well known concentration results for the multiplicative drift theorem~\cite[\eg,][]{DoerrG10MultiplicativeDrift,LenglerS18DriftRevisited}.
Since we only have bounds on the steps of our process for a finite time horizon, we adapt these results to the setting that incorporates such stochasticity.
We note that \cref{eq:multiplicative-drift-concentration-with-failure:drift-bound} in the following theorem essentially is a conditional drift condition, very similar to the drift theorems developed by \citet{HamanoUSMA25}.

\begin{restatable}{theorem}{multiDriftWithFailure}
    \label{thm:multiplicative-drift-concentration-with-failure}
    Let $(X_t)_{t \in \N}$ be an integrable random process over~$\R_{\geq 0}$, adapted to a filtration $(\filt{t})_{t \in \N}$.
    Let $\xmin \in \R_{> 0}$ and $T = \inf\set{t \in \N \given X_t < \xmin}$.
    Furthermore, let $t' \in \N \cup \set{\infty}$ and $q \in (0, 1]$, and let $(A_t)_{t \in [0 .. t']}$ be a collection of events\footnote{In case that $t' = \infty$, we interpret the set $[0 .. t']$ as~$\N$.}.
    Assume that there is a $\delta \in (0, 1)$ such that for all $t \in [0 .. t']$, we have
    \begin{equation}
        \label{eq:multiplicative-drift-concentration-with-failure:drift-bound}
        \expect{(X_t - X_{t + 1}) \cdot \indic{t < T \land \cap_{s \in [0 .. t']} A_s} \given \filt{t}}
        \geq \delta X_t \cdot \indic{t < T} \prob{\cap_{s \in [0 .. t']} A_s \given \filt{t}} .
    \end{equation}

    Then we have for all $t \in [0 .. t']$ that
    \begin{align*}
        \prob{T > t}
         & \leq \prob{X_t \cdot \indic{t < T} \geq \xmin \land \cup_{s \in [0 .. t']} \overline{A_s}}                               \\
         & \quad+ (1 - \delta)^t \cdot \frac{\expect{X_0 \cdot \indic{T > 0 \land \cap_{s \in [0 .. t']} A_s}}}{\xmin}              \\
         & \leq \prob{\cup_{s \in [0 .. t']} \overline{A_s}} + (1 - \delta)^t \cdot \frac{\expect{X_0 \cdot \indic{T > 0}}}{\xmin}.
    \end{align*}
    In particular, if $T > 0$, and there is a $q \in [0, 1]$ such that $\prob{\cup_{s \in [0 .. t']} \overline{A_s}} \leq q$, then for all $\gamma \in \R_{> 0}$, if $t' \geq (\ln(\expect{X_0} / \xmin) + \gamma) / \delta \eqqcolon \beta$, then $\prob{T > \beta} \leq q + \eulerE[-\gamma]$.
\end{restatable}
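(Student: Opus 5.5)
The plan is to run the standard multiplicative-drift argument on a process killed both at~$T$ and outside the good event, and then to control the good event separately. Write $A \coloneqq \cap_{s \in [0 .. t']} A_s$ and, for all $t \in [0 .. t']$, define $Z_t \coloneqq X_t \cdot \indic{t < T} \cdot \indic{A}$. The goal is the one-step contraction $\expect{Z_{t+1}} \leq (1-\delta)\expect{Z_t}$, which by induction gives $\expect{Z_t} \leq (1-\delta)^t \expect{Z_0}$. Since $\indic{0 < T} = \indic{T > 0}$, this last quantity is exactly the expectation appearing in the statement.

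For the contraction step, I use that $X_{t+1} \geq 0$ and $\indic{t+1 < T} \leq \indic{t < T}$. This gives $Z_{t+1} \leq X_{t+1}\indic{t<T}\indic{A}$, and therefore
\begin{equation*}
    \expect{Z_{t+1} \given \filt{t}} \leq \expect{X_t \indic{t<T}\indic{A} \given \filt{t}} - \expect{(X_t - X_{t+1})\indic{t<T \land A} \given \filt{t}} .
\end{equation*}
The drift condition \eqref{eq:multiplicative-drift-concentration-with-failure:drift-bound} bounds the subtracted term from below by $\delta X_t \indic{t<T}\prob{A \given \filt{t}}$. The event $\{t < T\} = \{T > t\}$ is $\filt{t}$-measurable because $T$ is a stopping time of the adapted process, and $X_t$ is $\filt{t}$-measurable as well. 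Hence $X_t\indic{t<T}\prob{A \given \filt{t}} = \expect{Z_t \given \filt{t}}$, so $\expect{Z_{t+1} \given \filt{t}} \leq (1-\delta)\expect{Z_t \given \filt{t}}$. Taking expectations, which is legitimate by integrability, yields the contraction.

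For the tail bound, note that on $\{T > t\}$ we have $X_t \geq \xmin$ by the definition of~$T$. Thus $\{T>t\} = \{X_t\indic{t<T} \geq \xmin\}$. Splitting this event according to whether $A$ fails gives
\begin{equation*}
    \prob{T>t} \leq \prob{X_t\indic{t<T} \geq \xmin \land \overline{A}} + \prob{Z_t \geq \xmin} .
\end{equation*}
Applying Markov's inequality to the second term, together with the contraction, gives the first claimed bound. The second, cruder bound follows by dropping events: the first term is at most $\prob{\overline{A}}$, and $\indic{A} \leq 1$ in the expectation. Finally, $\overline{A} = \cup_s \overline{A_s}$.

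For the ``in particular'' part, I apply the bound at the time $t = \beta$ (or a suitable integer rounding of it, which requires $t' \geq \beta$), use $(1-\delta)^t \leq \eulerE[-\delta t]$, and use $\expect{X_0\indic{T>0}} \leq \expect{X_0}$. This gives $\prob{T>\beta} \leq q + \eulerE[-\gamma]$.

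The main obstacle is the conditional manipulation in the contraction step. One must justify pulling $X_t\indic{t<T}$ out of the conditional expectation of $\indic{A}$, even though $A$ depends on the future up to~$t'$; only the $\filt{t}$-measurability of $X_t\indic{t<T}$ is needed, and the future-dependence of~$A$ causes no problem. A secondary point is the integer rounding of~$\beta$: since $T$ is integer-valued, $\{T > \beta\} = \{T > \floor{\beta}\}$, so I work with $\ceil{\beta} \leq t'$ and absorb the rounding into $\gamma$ if necessary.
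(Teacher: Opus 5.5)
Your argument for the main inequality is correct and follows the paper's proof almost verbatim. It uses the same killed process $Z_t = X_t\indic{t<T}\indic{\cap_s A_s}$, the one-step contraction $\expect{Z_{t+1}\given\filt{t}}\le(1-\delta)\expect{Z_t\given\filt{t}}$ with the tower rule, Markov's inequality, and the split of $\{T>t\}=\{X_t\indic{t<T}\ge\xmin\}$ along $\cap_s A_s$. You also spell out the details that the paper compresses into ``by rearranging'': $X_{t+1}\ge 0$, $\indic{t+1<T}\le\indic{t<T}$, and the $\filt{t}$-measurability of $X_t\indic{t<T}$.

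The one step that does not work as written is the rounding in the ``in particular'' clause, and here you and the paper share the same imprecision. Your sentence ``$\{T>\beta\}=\{T>\floor{\beta}\}$, so I work with $\ceil{\beta}$'' is a non sequitur.
\begin{itemize}
\item Applying the bound at $t=\ceil{\beta}$ controls $\prob{T>\ceil{\beta}}$. This quantity is at most $\prob{T>\beta}$, so bounding it does not bound $\prob{T>\beta}$.
\item Applying the bound at $t=\floor{\beta}$ only yields $q+\eulerE[\delta-\gamma]$, because $\delta\floor{\beta}>\delta\beta-\delta$.
\end{itemize}
Nothing can be ``absorbed into $\gamma$'' here, because the literal claim is false for non-integer $\beta$. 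Take $X_t=2^{-t}\xmin$ deterministically, all $A_s$ sure, $\delta=\frac12$, $\gamma=\frac25$, and $q=\frac1{10}$. The drift condition holds with equality and $\beta=\frac45$, but $T=1$, so $\prob{T>\beta}=1>q+\eulerE[-2/5]$.

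The paper's proof likewise chooses $t=\ceil{\cdot}$. So what both arguments actually establish is $\prob{T>\ceil{\beta}}\le q+\eulerE[-\gamma]$, using that $t'\ge\beta$ with $t'\in\N\cup\{\infty\}$ gives $t'\ge\ceil{\beta}$. That is the correct form of the clause, and it is all the application in the main theorem needs.
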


\section{Main Result and Proof Overview}
\label{sec:main-result}

Our main result bounds the runtime of the \rcga on the \globalOneMax fitness function with high probability while only depending linearly on~$r$, up to polylogarithmic factors.
As mentioned above, it requires, for any constant $\varepsilon \in \R_{> 0}$, that $r = \bigO{n^{1 / 6 - \varepsilon}}$, as well as a sufficiently large~$K$.
For the best choice of~$K$, it improves the previously best bound by \citet{AdakW25} (\Cref{thm:rcga-runtime-adak-witt}) by a factor of order $r^2 / \bigl(\log(n) \log^2(r)\bigr)$, which is asymptotically larger than~$1$ for $r = \bigOmega{\log n}$.

\begin{restatable}{theorem}{rcgaOnGom}
    \label{thm:rcga-on-gom}
    Let $r \in \N_{\geq 2}$ with $r = \bigO{n^{1 / 6 - \varepsilon}}$ for a constant $\varepsilon \in \R_{> 0}$, and let $\cstar \in \R_{> 0}$ be a sufficiently large constant.
    Consider the \rcga optimizing \globalOneMax, with a well-behaved $K \geq \cstar r \sqrt{n} \ln^2(n)\ln^2(r)$ with $K = \poly(n)$.
    Then with probability at least $1 - \bigO{\frac{1}{n}}$, the \rcga finds the global optimum after $\bigO[\big]{K \sqrt{n} \log(n) \log(r)}$ iterations and function evaluations.

    For the lowest admissible value of~$K$, this results in a runtime bound of $\bigO[\big]{n r \log^3(n) \log^3(r)}$ iterations and function evaluations.
\end{restatable}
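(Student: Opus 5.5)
We would prove \Cref{thm:rcga-on-gom} by a phase argument that repeatedly halves, at every position $i\in[n]$, the interval of values carrying almost all of row $i$'s probability mass. First we dispose of the easy regime. If $K \geq \cstar r^2\sqrt{n}\ln n$, then \Cref{thm:rcga-runtime-adak-witt} applies directly and gives the bound $\bigO{Kr\sqrt{n}(\log r+\log K)}$. This is not quite the target, so we only use it to cover boundary cases and concentrate on the main regime $K=\bigTheta{r\sqrt n\ln^2(n)\ln^2(r)}$ up to polynomial size.

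Phase $\kappa\in[0..\ceil{\log_2 r}]$ begins with all but a negligible (say $\bigO{1/n^2}$ total) fraction of the mass of every row lying in a block $S_\kappa=[r-r/2^\kappa .. r-1]$. Within this block, the frequencies are approximately monotone, in the sense that the relative masses of well-chosen sub-blocks deviate only by constant factors from their initial proportions. We split the time steps affecting a frequency into two kinds. Genetic-drift steps are steps without a fitness signal, and we control them by the self-loop genetic-drift theorem (\Cref{thm:rcgaNeutralConcentration}). Since a frequency of order $1/r$ changes only with probability $\bigO{1/r}$, the accumulated variance over $T=\bigO{K\sqrt n\log n}$ steps is $\bigO{T/(rK^2)}$. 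With our choice of $K$, \Cref{thm:martingaleConcentration} then keeps the drift-induced deviation of each frequency below a small constant fraction of its current value. Biased steps are steps in which the two samples differ at $i$ and the comparison of the remaining $n-1$ positions does not swamp position~$i$. Using the convolution lemma (\Cref{lem:probabilityShiftingLowerBound}), we show that the rest of the sum has per-value point probabilities of order $1/(\sqrt{n}\,r\cdot\text{polylog})$ rather than $1/\sqrt{nr^2}$ pessimistically. Hence a biased step occurs with probability about $\bigOmega{1/(\sqrt n\,\mathrm{polylog})}$ times the sampling-variance share of position $i$.

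In a biased step, the mass of the upper half $S_{\kappa+1}$ of $S_\kappa$ grows in expectation by an amount proportional to $\mu_i(S_{\kappa+1})\,\mu_i(S_\kappa\setminus S_{\kappa+1})/K$. Let $X_t$ denote the mass of the lower half, $X_t=\mu_i^{(t)}(S_\kappa\setminus S_{\kappa+1})$. Then $X$ has multiplicative drift $\delta=\bigOmega{1/(K\sqrt n\,\mathrm{polylog})}$ conditional on the event $A_t$ that the monotonicity/balance invariants hold. We apply \Cref{thm:multiplicative-drift-concentration-with-failure} with $\xmin=1/K$ (below this the frequencies of the lower half are $0$, since $K$ is well-behaved). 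This yields phase length $\bigO{K\sqrt n\log n}$ with failure probability $\bigO{n^{-2}}$ per position. A union bound over the $n$ positions and the $\bigO{\log r}$ phases then gives the total $\bigO{K\sqrt n\log n\log r}$. When $\abs{S_\kappa}=1$ at every position, the optimum is sampled.

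The main obstacle is maintaining the invariant $A_t$. We must show that during a phase the sub-block masses inside $S_{\kappa+1}$ keep their relative proportions, so that the next phase starts again in a near-monotone state. Without this, unlucky early samples of small values would self-reinforce. Here we would invoke \Cref{lem:adequate-biased-growth}, proven via the Chernoff-type bound \Cref{thm:success-upper-parts} for successes with time-dependent probabilities. The plan is to view each biased increase as a Bernoulli trial landing in a sub-block with probability proportional to that sub-block's current mass, and to show concentration of the counts over the $\bigOmega{K\,\mathrm{polylog}}$ relevant trials. The smallest sub-blocks have initial mass about $1/r$. For them, this concentration needs $K/r^2$ to dominate the error terms, i.e.\ $r^3\lesssim\sqrt n/\mathrm{polylog}$, which is exactly where $r=\bigO{n^{1/6-\varepsilon}}$ enters. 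I would try first to make this invariant hold with constant-factor slack, degrading by at most a $(1+1/\log r)$ factor per phase, so that it survives all $\bigO{\log r}$ phases. Finally, plugging in $K=\cstar r\sqrt n\ln^2(n)\ln^2(r)$ gives $\bigO{nr\log^3(n)\log^3(r)}$.
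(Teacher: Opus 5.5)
Your architecture matches the paper's. Both proofs use nested, shrinking intervals per row; halving instead of the paper's ratio $2/3$ makes no difference. Both use multiplicative drift with failure events via \Cref{thm:multiplicative-drift-concentration-with-failure} with $\xmin = 1/K$, control genetic drift by \Cref{thm:rcgaNeutralConcentration} at relative precision $1/\kappa^*$, keep the ratio invariant via \Cref{lem:adequate-biased-growth} at a loss of $(1-1/\kappa^*)^3$ per phase, and finish with union bounds over rows and phases.

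The step that does not go through as written is the drift bound, which is exactly where the second factor of~$r$ is saved. You place the anti-concentration of the difference $D_i$ of the other $n-1$ positions on the scale $\sqrt{n}\,r$ of the initial standard deviation, i.e., point probabilities of order $1/(\sqrt{n}\,r)$. In phase~$\kappa$, however, the gap between a lower and an upper value at position~$i$ is only of order $r/2^\kappa$. On that scale you get a biased-step probability of order $2^{-\kappa}/\sqrt{n}$, phase lengths of order $2^{\kappa}K\sqrt{n}\log n$, and a total of order $rK\sqrt{n}\log n$. You also carry a polylogarithmic loss in $\delta$ but then claim a phase length of $\bigO{K\sqrt{n}\log n}$. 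With such a loss, the bound $\bigO{K\sqrt{n}\log(n)\log(r)}$ does not follow.

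The missing ingredient, as in \Cref{lem:driftBound}, has two parts.
\begin{enumerate}
\item Once every row $k\neq i$ has all its mass in $S_\kappa$ (your global phase definition gives exactly this), \Cref{lem:varianceUpperBound} with $j^\star=\ell_\kappa$ gives $\sigma\le\sqrt{n-1}\,(r-1-\ell_\kappa)$. So the standard deviation shrinks with the current width.
\item You may only count pairs with one value in the bottom block $K_\kappa$ and the other in the top block $S_{\kappa+2}$, which lie a constant fraction of the width apart. By the ratio invariant $\mu_i(S_{\kappa+2})\ge c_{\mathrm{drift}}\,\mu_i(S_{\kappa+1})$, such pairs occur with probability at least $c_{\mathrm{drift}}\,\mu_i(S_{\kappa+1})\bigl(1-\mu_i(S_{\kappa+1})\bigr)$. \Cref{lem:biasedStepLowerBound} with $\Delta\approx(r-1-\ell_\kappa)/2$ then turns such a pair into a biased step with probability $\Omega(\Delta/\sigma)=\Omega(1/\sqrt{n})$, uniformly in~$\kappa$ and without polylogarithmic loss.
\end{enumerate}
Since $\mu_i(S_{\kappa+1})$ stays above a constant during the phase, this gives $\delta=\Omega(1/(K\sqrt{n}))$.

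Your claim that the upper-half mass grows at a rate proportional to $\mu_i(S_{\kappa+1})\,\mu_i(S_\kappa\setminus S_{\kappa+1})$ holds only for such well-separated pairs. Two adjacent values differ by~$1$, and for them the biased-step probability is only $\Omega(1/\sigma)$.

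Finally, this three-block separation breaks down once the width is constant, so halving all the way to $|S_\kappa|=1$ needs a separate endgame. The paper stops the phase argument when $\ell_\kappa>r-10$ and finishes with \Cref{thm:rcga-runtime-adak-witt} on the constantly many remaining values. That is also how it handles constant~$r$.
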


As explained in the introduction, we save a factor of~$r^2$ in comparison to the result by \citet{AdakW25} by saving a factor of~$r$ in two occasions:
(1) We conduct a tighter analysis for how frequencies behave in case of only receiving random noise as input for their update (known as \emph{genetic drift}).
(2) We conduct a tighter analysis for how likely it is that frequencies do receive a non-noisy input (known as \emph{biased steps}).

\textbf{High-level proof idea.}
The main idea of the proof of \Cref{thm:rcga-on-gom} applies the ideas of the proof technique for the binary case (as done by \citet{sudholt2019choice}) in an iterative fashion.
Roughly speaking, for each position $i \in [n]$, we create blocks of consecutive frequencies of initially approximately the same probability mass that take on the roles of the two values~$0$ and~$1$ in the binary case.
Once all probability mass from the block resembling~$0$ is removed, we apply our argument recursively to the remaining frequencies.
We call the time that it takes for this shift of probability mass a \emph{phase}.
Since we show that the block resembling the value~$0$ initially represents a constant fraction of the remaining frequencies with positive mass, we require overall roughly~$\ln r$ phases.

In order to combine our recursive argument for all~$n$ positions, we utilize that the process during each phase (per position) exhibits multiplicative drift, which is strongly concentrated around its expectation.
Hence, by relying on \Cref{thm:multiplicative-drift-concentration-with-failure}, we obtain that each phase for each position lasts with high probability not longer than order $K \sqrt{n} \ln n$.
A union bound over all~$n$ positions yields that a single phase for all~$n$ positions also only lasts order $K \sqrt{n} \ln n$ with high probability.
Iterating this argument for all phases, as discussed above, yields the final result.

\textbf{More detailed proof idea.}
In order to make our ideas more formal, we first formalize the concept of \emph{random-walk steps} and \emph{biased steps}, which dates back to the analysis by \citet{sudholt2019choice} for the binary case.
In a nutshell, when considering a single position $i \in [n]$, the update performed receives a meaningful input if the difference in fitness of both samples, excluding position~$i$, is smaller than the difference between the higher value and the lower value sampled at position~$i$.
In this case, the value at position~$i$ determines the order of the two samples, and the algorithm learns that higher values of \globalOneMax are better than lower ones.
In the other cases, the value at position~$i$ does not determine the ranking of the two samples, and any value at position~$i$ may receive an update (in particular, a higher value may be ranked worse than a lower value).

An update, and consequently an iteration, where position~$i$ determines the ranking of the two samples is called a \emph{biased step}.
All other updates are called \emph{random-walk steps}.

More formally, let $t \in \N$ be an iteration of the \rcga optimizing \globalOneMax, and let $i \in [n]$ be a row index of the frequency matrix.
Using the notation from \Cref{algorithm:r-cGA-rOneMax}, we define
\begin{align}
    \label{eq:sumOfSamples}
    \sumOfSampleValues{t}{1}{i}
     & = \textstyle\sum\nolimits_{k \in [n] \smallsetminus \set{i}} \sample{t}{1}{k}
    \quad \textrm{and}                                                               \\
    \notag
    \sumOfSampleValues{t}{2}{i}
     & = \textstyle\sum\nolimits_{k \in [n] \smallsetminus \set{i}} \sample{t}{2}{k}
    \quad \textrm{as well as}                                                        \\
    \notag
    D^{(t)}_i
     & = \sumOfSampleValues{t}{1}{i} - \sumOfSampleValues{t}{2}{i} .
\end{align}
We have a biased step in iteration~$t$ if $\max\set{\sample{t}{1}{i}, \sample{t}{2}{i}} - \min\set{\sample{t}{1}{i}, \sample{t}{2}{i}} > D^{(t)}_i$.
In all other cases, we have random-walk steps.

In order to carefully follow the progress of the \rcga, we group the frequencies at each position $i \in [n]$ into different blocks so that it is likely to observe a frequency increase in the block of the highest frequencies.
Roughly speaking, our grouping splits the frequencies into four blocks:
(1) The lowest-value frequencies that are already at~$0$ and do not contribute anything to the algorithm anymore since sampling from the corresponding values has become impossible.
(2) The next-higher frequencies whose maximum value is chosen as explained below in \cref{eq:intervalHierarchy}.
(3) Some intermediate-value frequencies whose maximum value is also chosen as explained below.
(4) And the highest-value frequencies that have at least an overall constant probability mass.

The idea behind this grouping is that the difference in values of blocks~(4) and~(2) are so large that it is likely to be higher than the difference~$D_i$ in fitness of the two samples in all other positions.
Vaguely, we separate blocks~(4) and~(2) by so much that their difference is at least as large as the standard deviation of the sampling variance, call it~$\sigma$, in the fitness difference~$D_i$.
We show that, similarly to the binary case, the probability to have a sufficiently small difference~$D_i$ is in the order of~$\sigma^{-1}$ (\Cref{lem:biasedStepLowerBound}).
Given our grouping,~$\sigma$ is in the order of~$\sqrt{n}$, resulting in a multiplicative drift with a factor of order $(K \sqrt{n})^{-1}$ at position~$i$ (\Cref{lem:driftBound}).

We note that we prove \Cref{lem:biasedStepLowerBound} via a lemma that carefully studies the convolution of rather arbitrary functions and may thus be of independent interest (\Cref{lem:probabilityShiftingLowerBound}).

Formally, for our grouping, let $\kappa^* = \ceil{\log_{3/2}(r - 1)}$, noting for $r > 2$ that $\kappa^* > \log_{3/2}(r - 1)$, because any positive-integer power of~$\frac{3}{2}$ is not an integer, as~$2$ and~$3$ are both prime.
We define for all $\kappa \in [0 .. \kappa^* - 1]$ the index
\begin{align}
    \label{eq:intervalHierarchy}
    \ell_{\kappa} & = \ceil*{\left(1 - \left(\frac{2}{3}\right)^\kappa\right)(r - 1)} \textrm{ as well as the intervals} \\
    \notag
    K_\kappa
                  & = \left[\ell_{\kappa}  ..
        \ell_{\kappa+1} - 1 \right] \textrm{ with }
    K_{\kappa^*} = \set{r - 1} \textrm{ and }                                                                            \\
    \notag
    S_\kappa      & = \textstyle\bigcup_{\nu \in [\kappa .. \kappa^*]} K_\nu .
\end{align}

Note that $(K_\kappa)_{\kappa \in [0 .. \kappa^*]}$ is a partition of $[0 .. r - 1]$, where it is important to check that $(1 - (\frac{2}{3})^{\kappa^*})(r - 1) > r - 2$.
Hence, $K_{\kappa^* - 1}$ stops at $r - 2$, which is exactly by~$1$ smaller than the start of~$K_{\kappa^*}$.

In order to refer to the probability mass to the intervals defined in \cref{eq:intervalHierarchy}, we define for all $I \subseteq [0 .. r - 1]$ and all $i \in [n]$ the notation
\begin{equation}
    \label{eq:probabilityMassNotation}
    \mu_i^{(t)}(I) \coloneqq \textstyle\sum\nolimits_{j\in I} \freq{t}{i}{j} .
\end{equation}

Given this terminology and notation, we prove a concentration result for random-walk steps (\Cref{thm:rcgaNeutralConcentration}), which shows that the overall probability mass that we lose is overall negligible (\Cref{lem:contribution-random-walk-steps-kappastar}).

Afterward, we show that the overall fraction of neighboring upper intervals from \cref{eq:intervalHierarchy} remains per phase roughly the same, except for the blocks whose probability mass is lowest and shifted to higher blocks (\Cref{lem:adequate-biased-growth}).
This guarantees that we have roughly the same starting setup for each phase.

We prove \Cref{lem:adequate-biased-growth} via an interesting theorem that is similar to classic Chernoff bounds but allows for a change in the success probability (\Cref{thm:success-upper-parts}), which reflects the update in probability mass that the intervals receive.

In the end, our main result is essentially a combination of our drift bound (\Cref{lem:driftBound}) and our result that each phase behaves roughly the same, only with the probability mass shifting to higher blocks (\Cref{lem:adequate-biased-growth}).

\section{The Building Blocks of Our Main Proof}
\label{sec:building-blocks}

We provide the mathematically rigorous details to the concepts discussed in \Cref{sec:main-result}.
In the statements, we refer to the definitions that we gave above, but we also introduce specific notation in particular cases that is mostly only needed for local instances.

\subsection{Probability of Biased Steps}
The \rcga makes progress to the optimum
thanks to the above-mentioned
biased steps, which implies a drift
of (sums of) frequencies. We bound the
probability of an event required
for a biased step in
\Cref{lem:biasedStepLowerBound} below.
Its proof is crucially based on the
following, more general auxiliary statement dealing
with a convolution.

\begin{restatable}{lemma}{probabilityShiftingLowerBound}
    \label{lem:probabilityShiftingLowerBound}
    Let $\rho \in \N$, $m \in \N_{\geq 1}$, $\rho^+ = \max \set{1, \rho}$, and $q\colon [m \rho^+] \to \R_{\geq 0}$ be arbitrary.
    Moreover, let $M \in \R_{\geq 0}$ be such that $\sum_{i \in [m \rho^+]} q(i) = M$.
    Then
    \begin{equation}
        \label{eq:probabilityShiftingLowerBound}
        \textstyle\sum\nolimits_{d \in [0 .. \rho]} \textstyle\sum\nolimits_{i \in [m \rho^+ - d]} q(i) q(i + d)
        \geq \frac{M^2}{2 m}.
    \end{equation}
\end{restatable}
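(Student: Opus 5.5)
The plan is to lower-bound the double sum by keeping only the pairs $(i, i+d)$ that lie in a common short block, and then to apply Cauchy--Schwarz over the blocks. Since $q \geq 0$, every term $q(i)q(i+d)$ is nonnegative, so we may discard any subset of the terms. Reindexing by $j = i + d$, the left-hand side of \cref{eq:probabilityShiftingLowerBound} equals $\sum q(i)q(j)$ over all pairs $i \leq j$ in $[m\rho^+]$ with $j - i \leq \rho$.

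\textbf{Case $\rho = 0$.} Here $\rho^+ = 1$, and the sum reduces to $\sum_{i \in [m]} q(i)^2$. By Cauchy--Schwarz, $\bigl(\sum_{i \in [m]} q(i)\bigr)^2 \leq m \sum_{i \in [m]} q(i)^2$. This gives a lower bound of $M^2/m \geq M^2/(2m)$.

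\textbf{Case $\rho \geq 1$.} Here $\rho^+ = \rho$. Partition $[m\rho]$ into the $m$ consecutive blocks $B_b = [(b-1)\rho + 1 .. b\rho]$ for $b \in [m]$, each of length $\rho$, and let $Q_b = \sum_{i \in B_b} q(i)$, so that $\sum_b Q_b = M$.

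For $i \leq j$ in the same block we have $j - i \leq \rho - 1 \leq \rho$, so the pair appears in the sum with $d = j - i$. Moreover, distinct pairs $(i,j)$ correspond to distinct terms. Hence the left-hand side is at least
\begin{equation*}
    \textstyle\sum\nolimits_{b \in [m]} \textstyle\sum\nolimits_{i, j \in B_b,\, i \leq j} q(i) q(j).
\end{equation*}

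Next, expand the square of each block sum:
\begin{equation*}
    Q_b^2 = \textstyle\sum\nolimits_{i \in B_b} q(i)^2 + 2 \textstyle\sum\nolimits_{i < j \in B_b} q(i) q(j) \leq 2 \textstyle\sum\nolimits_{i \leq j \in B_b} q(i) q(j).
\end{equation*}
The inequality uses nonnegativity of the diagonal terms. Therefore the left-hand side is at least $\frac{1}{2}\sum_{b} Q_b^2$. By Cauchy--Schwarz, $\sum_b Q_b^2 \geq M^2/m$, which yields the claimed bound $M^2/(2m)$.

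\textbf{Main obstacle.} The only delicate point is choosing the block length so that every within-block pair has offset at most $\rho$ while there are still only $m$ blocks. Blocks of length exactly $\rho^+$ achieve both, and the factor $\tfrac{1}{2}$ absorbs the fact that off-diagonal pairs are counted only once rather than twice. The rest is routine bookkeeping: check that the reindexing $j = i+d$ with $i \in [m\rho^+ - d]$ covers all pairs used, and that no term is counted twice.
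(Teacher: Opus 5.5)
Your proposal is correct and takes essentially the same approach as the paper. Both partition $[m\rho^+]$ into $m$ consecutive blocks of length $\rho^+$, keep only same-block pairs, use the factor $\frac{1}{2}$ to account for off-diagonal pairs being counted once, and finish with Cauchy--Schwarz over the block sums. The only cosmetic differences are that the paper symmetrizes the whole sum before restricting to blocks and handles $\rho = 0$ uniformly via singleton blocks, whereas you symmetrize per block and split off that case.
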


The following lemma bounds the probability of certain
outcomes of the difference in fitness for all positions but position $i \in [n]$ (\cref{eq:sumOfSamples}).

\begin{restatable}{lemma}{biasedStepLowerBound}
    \label{lem:biasedStepLowerBound}
    Let $r, n \in \N_{\geq 2}$.
    Consider the \rcga optimizing \globalOneMax, with arbitrary well-behaved~$K$.
    Let $i \in [n]$, let $\Delta \in [0 .. (n - 1) (r - 1)]$, and let $t' \in \N$ denote a random iteration.
    Last, using the notation from \cref{eq:sumOfSamples}, let $\sigma \in \R_{\geq (\Delta + 2) / 4}$ such that for all $\ell \in \set{1, 2}$, we have $\var{\sumOfSampleValues{t'}{\ell}{i} \given \freq{t'}{}{}} \leq \sigma^2$ as well as $\floor{\expect{\sumOfSampleValues{t'}{\ell}{i} \given \freq{t'}{}{}} - 2 \sigma} + 1 \geq 0$ and $\ceil{\expect{\sumOfSampleValues{t'}{\ell}{i} \given \freq{t'}{}{}} + 2 \sigma} - 1 \leq (n - 1) (r - 1)$.
    Then
    \begin{equation*}
        \prob[\big]{\differenceOfSampleValues{t'}{i} \in [0 .. \Delta]}
        \geq \frac{9 \max \set{1, \Delta}}{32 (4 \sigma - 1)} .
    \end{equation*}
\end{restatable}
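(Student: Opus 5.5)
The plan is to condition on $\freq{t'}{}{}$, use Chebyshev to localize both sums to an interval of length about $4\sigma$, and then apply \Cref{lem:probabilityShiftingLowerBound} to the restricted distribution. Throughout we condition on $\freq{t'}{}{}$. Given the frequencies, $\sumOfSampleValues{t'}{1}{i}$ and $\sumOfSampleValues{t'}{2}{i}$ are independent, identically distributed random variables. They have the same distribution because both samples are drawn from the same frequency matrix, and the roles of $\sample{t'}{1}{}$ and $\sample{t'}{2}{}$ are symmetric before selection. Write $\mu = \expect{\sumOfSampleValues{t'}{1}{i} \given \freq{t'}{}{}}$ and let $P(v)$ denote the common probability mass function on $[0 .. (n-1)(r-1)]$. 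Then
\begin{equation*}
    \prob{\differenceOfSampleValues{t'}{i} \in [0 .. \Delta] \given \freq{t'}{}{}} = \textstyle\sum\nolimits_{d \in [0 .. \Delta]} \textstyle\sum\nolimits_{v} P(v)\, P(v + d).
\end{equation*}
A lower bound that is uniform in the frequencies then yields the unconditional statement by the law of total probability.

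The first step localizes the mass. Let $J = [\floor{\mu - 2\sigma} + 1 .. \ceil{\mu + 2\sigma} - 1]$, which consists of the integers strictly within distance $2\sigma$ of $\mu$. By the hypotheses, $J$ lies inside the support range $[0 .. (n-1)(r-1)]$. By Chebyshev's inequality with $\var{\sumOfSampleValues{t'}{1}{i}} \le \sigma^2$, the mass outside $J$ is at most $\prob{|X - \mu| \ge 2\sigma} \le 1/4$. Hence $M := \sum_{v \in J} P(v) \ge 3/4$, and so $M^2 \ge 9/16$. The set $J$ contains at most $\ceil{4\sigma} - 1$ integers; more carefully, it contains at most $4\sigma - 1$ integers when $4\sigma$ is an integer, and at most $\floor{4\sigma}$ in general. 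We bound this length by $L \le 4\sigma - 1$ up to the rounding handled below. Dropping all pairs $(v, v+d)$ with some endpoint outside $J$ only decreases the double sum.

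The second step applies \Cref{lem:probabilityShiftingLowerBound}. Set $\rho = \Delta$, so that $\rho^+ = \max\{1, \Delta\}$, and choose $m = \ceil{L / \rho^+}$. Let $q(k) = P(a + k - 1)$ for $k \le |J|$, where $a = \min J$, and pad with $q = 0$ on the remaining indices up to $m\rho^+$. The left-hand side of \cref{eq:probabilityShiftingLowerBound} is exactly our restricted double sum, so the lemma gives a bound of at least $M^2/(2m) \ge \frac{9}{32 m}$. It remains to show $m \le (4\sigma - 1)/\rho^+$, which would give $\frac{9 \rho^+}{32(4\sigma - 1)}$. This is where the assumption $\sigma \ge (\Delta + 2)/4$, i.e.\ $4\sigma - 1 \ge \Delta + 1 \ge \rho^+$, enters. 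It guarantees that the interval is at least one block long, so that $m \ge 1$ is meaningful. Rounding $\ceil{L/\rho^+}$ up could cost up to a factor $2$, which is absorbed in the constant only if we are careful.

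I expect this last rounding and counting step to be the main obstacle: getting exactly the constant $9/32$ together with the denominator $4\sigma - 1$ requires precise control of $|J|$ and of $m$. As a fallback I would take $m = \floor{|J|/\rho^+}$ and discard the tail of $J$. Discarding it loses some mass, so this is only viable if Chebyshev is applied to the shorter interval directly. Alternatively, I would exploit the slack in $\sigma \ge (\Delta+2)/4$ to show $\ceil{|J|/\rho^+} \cdot \rho^+ \le 4\sigma - 1 + \rho^+ - 1$ and then adjust. I would first try to verify that $|J| \le 4\sigma - 1$ always holds, using the floor-plus-one and ceiling-minus-one endpoints. Then $m\rho^+ \le 4\sigma - 1$ can be arranged by choosing the padding appropriately whenever $\rho^+$ divides suitably. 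Otherwise I would settle for a slightly weaker constant and check whether the later uses tolerate it.
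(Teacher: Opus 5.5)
Your architecture matches the paper's proof: condition on $\boldsymbol{p}^{(t')}$, restrict to the integers $J$ strictly within $2\sigma$ of the mean, use Chebyshev to get mass $M \ge 3/4$ on $J$, and apply \Cref{lem:probabilityShiftingLowerBound} with $\rho = \Delta$. The gap is the final counting step. You leave it open, and the first route you propose rests on a false count.

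$J$ is the set of integers in an open interval of length $4\sigma$, so you only have $\lceil 4\sigma\rceil - 1 \le |J| \le \lceil 4\sigma\rceil$.
\begin{itemize}
\item With mean $\mu = 5/2$ and $\sigma = 1$, you get $J = [1..4]$, so $|J| = 4\sigma > 4\sigma - 1$.
\item With $\sigma = 7/8$ and $\mu - 2\sigma = 9/10$, you get $|J| = 4 > \lfloor 4\sigma \rfloor$.
\end{itemize}
With your zero padding and $m = \lceil |J|/\rho^+\rceil$, the lemma certifies only $9/(32\lceil |J|/\rho^+\rceil)$, and this can fall below the target. For $\Delta = 1$, $\sigma = 3/4$, $\mu = 2$ one has $J = [1..3]$ and $m = 3$, so you obtain $9/96$ instead of the claimed $9/64$. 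Truncating to $\lfloor |J|/\rho^+\rfloor$ blocks does not help either: the discarded tail of $J$ may carry mass, and then $M \ge 3/4$ is no longer guaranteed. So the proposal does not establish the lemma with the stated constant.

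For comparison, the paper handles this step by choosing $m = \lfloor (4\sigma - 1)/\max\{1,\Delta\}\rfloor$ and using only the (true) lower bound $|I| \ge 4\sigma - 1$. It then still writes $\Pr[X^{(t',1)}_{\smallsetminus i} \in I]^2$ for $M^2$. That is justified only if the lemma's domain $[m\rho^+]$ covers all of $I$, whereas $m\rho^+ \le 4\sigma - 1$ can be smaller than $|I|$. So the paper's text also glosses over the rounding issue you flagged, and your suspicion that this is the delicate point is well founded.

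What the approach does give rigorously, via your padding and $\rho^+\lceil |J|/\rho^+\rceil \le |J| + \rho^+ - 1 < 4\sigma + \rho^+$, is
\[
  \Pr\bigl[D^{(t')}_i \in [0..\Delta]\bigr] > \frac{9\max\{1,\Delta\}}{32\,(4\sigma + \max\{1,\Delta\})} .
\]
This is within a factor $3$ of the claimed bound, since $\max\{1,\Delta\} \le 4\sigma - 1$. It is also asymptotically the same in the lemma's application in \Cref{lem:driftBound}, where $\sigma = \sqrt{n-1}\,(r-1-\ell_\kappa)$ is much larger than $\Delta = (r-1-\ell_\kappa)/2$. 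So your fallback of a slightly weaker constant is the right repair, but it proves a modified statement rather than the one given.
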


The following lemma is an auxiliary
result that bounds
the variance of the sampling distribution,
which typically enters our bounds
on probabilities of biased steps (\eg,
\Cref{lem:biasedStepLowerBound} above)
and drift bounds. The lemma allows
that certain subintervals of frequency
values are empty, which, as explained above, is at the
core of our main proof strategy.

\begin{restatable}{lemma}{varianceUpperBound}
    \label{lem:varianceUpperBound}
    Recall the notation from \cref{eq:sumOfSamples}.
    Let $r \in \N_{\geq 2}$ and $n \in \N_{\geq 2}$.
    Consider the \rcga optimizing \globalOneMax, with arbitrary well-behaved~$K$.
    Let $i \in [n]$ and let $\jStar \in [0 .. r - 1]$.
    Furthermore, let $t' \in \N$ be a random iteration such that for each $k \in [n] \smallsetminus \set{i}$ the sum of all frequencies at position~$k$ with values in $[\jStar .. r - 1]$ is~$1$.
    Then we have for $\ell \in \set{1, 2}$ that
    \begin{equation*}
        \var{\sumOfSampleValues{t'}{\ell}{i} \given \freq{t'}{}{}}
        \leq (n - 1) (r - 1 - \jStar)^2  .
    \end{equation*}
\end{restatable}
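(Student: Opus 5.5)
The variance bound follows from independence across positions together with a per-position bound on the support width. Fix $\ell \in \set{1,2}$. Conditional on $\freq{t'}{}{}$, the components $\sample{t'}{\ell}{k}$ for $k \in [n] \smallsetminus \set{i}$ are mutually independent, since each row of the frequency matrix is sampled independently. Hence
\begin{equation*}
    \var{\sumOfSampleValues{t'}{\ell}{i} \given \freq{t'}{}{}} = \textstyle\sum\nolimits_{k \in [n] \smallsetminus \set{i}} \var{\sample{t'}{\ell}{k} \given \freq{t'}{}{}} .
\end{equation*}
It therefore suffices to show that each summand is at most $(r - 1 - \jStar)^2$. There are exactly $n-1$ summands, so summing gives the claim.

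For a fixed $k \neq i$, the hypothesis says that $\mu_k^{(t')}([\jStar .. r - 1]) = 1$. Because frequencies are nonnegative, every value below~$\jStar$ has frequency~$0$ at position~$k$. So, conditional on $\freq{t'}{}{}$, the random variable $\sample{t'}{\ell}{k}$ takes values almost surely in $[\jStar .. r-1]$, an interval of length $r - 1 - \jStar$.

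I then apply the elementary bound that any random variable $Z$ supported in an interval $[a, b]$ has $\Var[Z] \leq \expect{(Z - a)^2} \leq (b-a)^2$. The first inequality holds because the variance minimizes $\expect{(Z-c)^2}$ over constants~$c$. The second holds because $0 \leq Z - a \leq b - a$. Popoviciu's inequality even gives $(b-a)^2/4$, but the stated weaker bound suffices. With $a = \jStar$ and $b = r-1$, this yields $\var{\sample{t'}{\ell}{k} \given \freq{t'}{}{}} \leq (r-1-\jStar)^2$.

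There is no real obstacle here. The only point needing care is that~$t'$ is a random iteration. This is harmless because the whole argument is conditional on $\freq{t'}{}{}$, and the assumed support property is a property of that matrix. Given the matrix, the two samples are drawn exactly as in a fixed iteration, so independence and the support bound hold pointwise for every realization, and hence also for the conditional variance.
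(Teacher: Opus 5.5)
Your proof is correct and takes essentially the same route as the paper: conditional on $\freq{t'}{}{}$ the components are independent, so the variance splits into $n-1$ per-position variances, and each of these is at most $(r-1-\jStar)^2$ because the component is supported on $[\jStar .. r-1]$. The only cosmetic difference is that the paper bounds the deviation from the mean directly (using that the mean lies in $[\jStar, r-1]$), whereas you bound $\mathrm{E}[(Z-\jStar)^2]$; both are the same elementary range argument.
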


\textbf{Large Biased Steps and Drift Bound.}
We precisely define
the intervals of frequency values
between which probability mass is moved, and we
introduce a more specific type
of large biased steps (\Cref{def:large-biased-step}). With these tools at hand, we
prove bounds on the drift of frequencies
between subintervals in tandem with
probabilities of large biased steps (\Cref{lem:driftBound}).
This
statement is applied in different phases and
subintervals in \Cref{subsect:bounds-phases}.

\begin{definition}
    \label{def:large-biased-step}
    Recall the notation from \cref{eq:intervalHierarchy}.
    Given $i \in [n]$, $t \in \N$, and $\kappa \in [0 .. \kappa^* - 2]$ the event $B_{i,\kappa,t}$, called a \emph{large biased step}, occurs at time~$t$ in position~$i$, if and only if one sample in iteration~$t$ has
    a value in $K_\kappa$ and the other one a value in $S_{\kappa+2}$ and the step is biased, \ie, the frequency update is performed with respect
    to the sample with its value in $S_{\kappa+2}$.
\end{definition}

\begin{restatable}{lemma}{driftBound}
    \label{lem:driftBound}
    Recall the notation from \cref{eq:sumOfSamples,eq:intervalHierarchy,eq:probabilityMassNotation} and \Cref{def:large-biased-step}.
    Let $r \in \N_{\geq 10}$ and $n \in \N_{\geq 4}$.
    Consider the \rcga optimizing \globalOneMax, with arbitrary well-behaved~$K$.
    Let $i \in [n]$, let $t'$ be a random iteration, and assume
    $\mu_i^{t'}(S_\kappa)=1$ for some $\kappa\ge 0$ satisfying $\ell_\kappa\le r-10$.   Assume furthermore that there is a constant~$\cdrift>0$ (not depending on~$i$ and~$t'$) such that $\mu_i^{(t')}(S_{\kappa+2}) \ge \cdrift \cdot \mu_i^{(t')}(S_{\kappa+1})$.
    Moreover,
    let
    \begin{align*}
        D_i^{(t')}(\kappa) & \coloneqq
        \frac{9\cdrift\cdot \sDstar (1-\sDstar) \max \set{1, (r-1-\ell_\kappa)/2}}{32 \bigl(4
            \sqrt{(n - 1) (r - 1 - \ell_\kappa)^2 }-1\bigr)}
        \\ & \ge
        \frac{\cdrift \cdot\sDstar (1-\sDstar)}{29\sqrt{n}},
    \end{align*}
    where the last estimate holds for $\ell_\kappa\le r-2$ and sufficiently
    large~$n$.
    Then we have for
    all $t\ge t'$ that
    \begin{align*}
         & \prob{B_{i,\kappa,t'}}  \ge  D_i^{(t')}(\kappa) \quad \textrm{and} \\
         & \expect{\sTstar{t'+1} - \sTstar{t'}\given \freq{t'}{}{}}
        \geq \frac{D_i^{(t')}(\kappa) }{K} .
    \end{align*}
    Moreover, the same bound holds for $\expect{\sDstarP{t'+1} - \sDstarP{t
                '}\given \freq{t'}{}{}}$.
\end{restatable}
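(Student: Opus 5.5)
Fix position~$i$ and iteration~$t'$ and condition on~$\freq{t'}{}{}$; by independence across positions, the values $\sample{t'}{1}{i}, \sample{t'}{2}{i}$ at position~$i$ are independent of $\differenceOfSampleValues{t'}{i}$.

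\textbf{Step 1: lower bound on $\prob{B_{i,\kappa,t'}}$.}
A sufficient condition for $B_{i,\kappa,t'}$ is the following. One sample has a value in~$K_\kappa$ at position~$i$. The other has a value in~$S_{\kappa+2}$. The sample with the~$S_{\kappa+2}$ value wins. The values in~$S_{\kappa+2}$ are at least $\ell_{\kappa+2}$, and those in~$K_\kappa$ are at most $\ell_{\kappa+1}-1$. Their difference is therefore at least $\ell_{\kappa+2}-\ell_{\kappa+1}+1$, which by \cref{eq:intervalHierarchy} is roughly $(2/3)^{\kappa+1}(r-1)/3$. I will check that this is at least $\Delta \coloneqq \floor{(r-1-\ell_\kappa)/2}$, or at least a constant fraction of it, using $\ell_\kappa\le r-10$ to absorb ceiling effects.

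If the sample carrying the larger value at~$i$ is the one whose remaining sum is smaller by at most~$\Delta$, it strictly wins. By symmetry of the labels~$1$ and~$2$, this happens with probability at least $\prob{\differenceOfSampleValues{t'}{i}\in[0..\Delta]}$, up to handling the boundary case $D=0$ and ties.

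The event that one value lies in~$K_\kappa$ and the other in~$S_{\kappa+2}$ has probability $2\mu(K_\kappa)\mu(S_{\kappa+2})$. Since $\mu(S_\kappa)=1$, we have $\mu(K_\kappa)=1-\mu(S_{\kappa+1})$. Together with $\mu(S_{\kappa+2})\ge\cdrift\,\mu(S_{\kappa+1})$, this probability is at least $2\cdrift\,\mu(S_{\kappa+1})(1-\mu(S_{\kappa+1}))$. The factor~$2$ covers the ordering over labels, or it is lost to the symmetry argument.

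\textbf{Step 2: applying the variance and convolution lemmas.}
Next I apply \Cref{lem:biasedStepLowerBound} with this~$\Delta$ and $\sigma=\sqrt{n-1}\,(r-1-\ell_\kappa)$. The variance bound needed there is exactly \Cref{lem:varianceUpperBound} with $\jStar=\ell_\kappa$.

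This requires every other position $k\ne i$ to have all mass in $[\ell_\kappa..r-1]$. The hypothesis only gives this for position~$i$, so here I assume it is intended for all positions in the phase, as in the surrounding analysis.

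The remaining conditions of \Cref{lem:biasedStepLowerBound} must also be checked. First, $\sigma\ge(\Delta+2)/4$ is immediate since $n\ge4$. Second, the two boundary conditions state that $\expect{X}\pm2\sigma$ stays within range. These may fail, and they are the main obstacle. I would handle them by noting that with variance at most~$\sigma^2$ one can shrink~$\sigma$, or I would re-derive the bound via Chebyshev and \Cref{lem:probabilityShiftingLowerBound} directly on the truncated support. Multiplying the two probabilities (they are independent) yields $D_i^{(t')}(\kappa)$. The simplified bound $\ge \cdrift\,\mu(1-\mu)/(29\sqrt n)$ then follows from $\max\{1,\Delta\}/(4\sigma-1)\ge 1/(8\sqrt{n})$ roughly, with constants checked for large~$n$.

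\textbf{Step 3: drift bounds.}
For the expected change of $\mu(S_{\kappa+2})$, I split into two kinds of steps. In a large biased step, $\mu(S_{\kappa+2})$ gains $1/K$. In every other step, compare with the mirrored event obtained by swapping the roles of the two samples' values at~$i$. In random-walk steps, winner and loser are exchangeable given the other positions, so the contribution to $\mu(S_{\kappa+2})$ has expectation zero. In other biased steps, higher values win, so a value in~$S_{\kappa+2}$ never loses to a value outside it; these contributions are nonnegative.

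Hence the drift is at least $\prob{B_{i,\kappa,t'}}/K\ge D_i^{(t')}(\kappa)/K$. The same argument applies to $\mu(S_{\kappa+1})$: each large biased step also moves mass from~$K_\kappa$ into $S_{\kappa+2}\subseteq S_{\kappa+1}$, and the other steps again contribute nonnegatively. I expect the exchangeability and sign argument to be routine. The real work is verifying the hypotheses of \Cref{lem:biasedStepLowerBound} in Step 2.
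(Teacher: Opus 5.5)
Your plan follows the paper's proof. You bound the probability of one value in $K_\kappa$ and one in $S_{\kappa+2}$ at position~$i$ by $\cdrift\,\mu(1-\mu)$, with $\mu=\mu_i^{(t')}(S_{\kappa+1})$. You multiply by $\Pr[D\in[0..\Delta]]$ from \Cref{lem:biasedStepLowerBound}, with $\sigma=\sqrt{n-1}\,(r-1-\ell_\kappa)$ from \Cref{lem:varianceUpperBound} at $j^\star=\ell_\kappa$, and then read off the drift.

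\textbf{The gap.} The step that fails as written is your check that the guaranteed gap $\ell_{\kappa+2}-\ell_{\kappa+1}+1$ is at least $\Delta=\lfloor(r-1-\ell_\kappa)/2\rfloor$. Write $u=(2/3)^\kappa(r-1)$. Then $r-1-\ell_\kappa=\lfloor u\rfloor$ and $\ell_{\kappa+2}-\ell_{\kappa+1}=\lfloor 2u/3\rfloor-\lfloor 4u/9\rfloor\le 2u/9+1$. So the largest window that still forces a strict win is $\Delta=\ell_{\kappa+2}-\ell_{\kappa+1}\approx\frac29(r-1-\ell_\kappa)$. Even with your extra factor~$2$ from the label ordering you get only $\frac49<\frac12$.

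Your constant-fraction fallback therefore proves the lemma with a smaller constant than the one built into $D_i^{(t')}(\kappa)$: roughly $\cdrift\,\mu(1-\mu)/(64\sqrt n)$ in place of $\cdrift\,\mu(1-\mu)/(29\sqrt n)$. This is harmless downstream, where only the order $\cdrift\mu(1-\mu)/\sqrt n$ is used.

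Be aware that the paper reaches its constant by asserting the gap is $\ell_{\kappa+2}-\ell_\kappa\approx\frac59(r-1-\ell_\kappa)$. That measures from the bottom of $K_\kappa$ rather than from its top value $\ell_{\kappa+1}-1$. Your computation is the correct one, so the paper's argument does not deliver the stated constant either.

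\textbf{The other points.} Here you are more careful than the paper, with two small corrections.
\begin{itemize}
\item \emph{Hypothesis on other positions.} Requiring every position $k\neq i$ to have no mass below $\ell_\kappa$ is genuinely needed. If the other positions still spread over $[0..r-1]$, the standard deviation of $D$ is of order $\sqrt n\,r$, while the decisive window has width $O(r-1-\ell_\kappa)$, so the claimed drift fails in late phases. The paper uses this tacitly. It can be ensured in the main proof by starting phase~$\kappa$ only after all positions have cleared $K_{\kappa-1}$, since zero frequencies stay zero.
\item \emph{Boundary conditions.} The paper never checks the boundary conditions of \Cref{lem:biasedStepLowerBound}, and they do fail, e.g.\ when the other positions are near $r-1$. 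Shrinking~$\sigma$ is not an option: the variance bound must still hold, and the upper condition can fail for every admissible $\sigma\ge(\Delta+2)/4$. Your second route does work. \Cref{lem:probabilityShiftingLowerBound} accepts any nonnegative~$q$, so take $q$ to be the point probabilities on the Chebyshev window, zero off the support, padded to a multiple of~$\rho^+$. This costs only constants.
\item \emph{Step~3.} Splitting into zero-mean random-walk steps and biased steps in which the higher value always wins requires the symmetric notion: $|D|\ge|a-b|$ versus $|D|<|a-b|$. The paper's one-sided definition does not work here. The cleanest version is pointwise. If position~$i$ samples $a>b$ with exactly one of them in the upper set, the conditional expected change is $\frac1K(\Pr[a\text{ wins}]-\Pr[b\text{ wins}])\ge\frac1K\Pr[D\in[0..a-b-1]]\ge0$, by symmetry of~$D$. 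This yields both drift bounds directly.
\end{itemize}
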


\subsection{Bounds for the Change
    of Frequency Sums in Phases}
\label{subsect:bounds-phases}

Recall the notation from \cref{eq:intervalHierarchy}.
During each phase $\kappa \in [0 .. \kappa^* - 1]$, we show that for each $\kappa' \in [\kappa + 1 .. \kappa^*]$, each sum of frequencies in~$S_{\kappa'}$ grows to about the fraction of~$S_{\kappa'}$ in comparison to~$S_{\kappa + 1}$, removing all probability mass from~$K_\kappa$ with high probability.
To this end, we study the contribution due to random-walk steps and due to biased steps separately, defining complementary sequences of increasing stopping times.
The foundation are the events, similar to those defined in \Cref{def:large-biased-step}, that define for each point in time whether it is a biased step or not.
Formally, for all $i \in [n]$ and all $t \in \N_{\geq 1}$, recalling the notation of \Cref{algorithm:r-cGA-rOneMax} as well as of \cref{eq:sumOfSamples}, we define the event for a random-walk step as
\begin{equation}
    \label{eq:random-walk-event}
    \mathcal{E}^{(t)}_i = \indic{\selectedSample{t - 1}{1}{i} - \selectedSample{t - 1}{2}{i} \leq D^{(t - 1)}_i} ,
\end{equation}
noting that~$\mathcal{E}^{(t)}_i$ is measurable with respect to~$\freq{t}{}{}$.

Let $t' \in \N$ be a random iteration, and let $i \in [n]$.
Based on \cref{eq:random-walk-event}, we define the sequence of increasing stopping times for the random-walk steps at position~$i$, starting from~$t'$, for all $s \in \N_{\geq 1}$ inductively as
\begin{align}
    \label{eq:random-walk-stopping-times}
    \randomWalkStoppingTime{i}{t'}[0] = t' \quad \textrm{and} \quad
    \randomWalkStoppingTime{i}{t'}[s] = \inf \set{t \in \N \mid t > \randomWalkStoppingTime{i}{t'}[s - 1] \land \mathcal{E}^{(t)}_i \textrm{ occurs}} .
\end{align}
Similarly, for the biased steps, for all $s \in \N_{\geq 1}$, we inductively define the sequence
\begin{align}
    \label{eq:biased-stopping-times}
    \biasedStoppingTime{i}{t'}[0] = t' \quad \textrm{and} \quad
    \biasedStoppingTime{i}{t'}[s] & = \inf \set{t \in \N \mid t > \biasedStoppingTime{i}{t'}[s - 1] \land \mathcal{E}^{(t)}_i \textrm{ does not occur}} .
\end{align}
Note that both sequences are strictly increasing by definition and define stopping times each with respect to the natural filtration of $(\freq{t' + t}{}{})_{t \in \N}$.

For a position $i \in [n]$ and an index $\kappa \in [\kappa^*]$ for an upper frequency sum, we define the \emph{contribution} of the probability mass in~$S_\kappa$ in terms of random-walk and biased steps, starting from a random iteration $t' \in \N$.
That is, for an increasing sequence of stopping times $(U_t)_{t \in \N}$, such as \cref{eq:random-walk-stopping-times} or~\eqref{eq:biased-stopping-times}, we define
\begin{equation}
    \label{eq:contribution-delta}
    \bigl(\Delta^{(U_t)}_i(S_{\kappa})\bigr)_{t \in \N_{\geq 1}} = \bigl(\mu^{(U_t)}_i(S_\kappa) - \mu^{(U_t - 1)}_i(S_\kappa)\bigr)_{t \in \N_{\geq 1}} ,
\end{equation}
noting that the time point $U_t - 1$ does not need to be an element from~$U$.
Last, using \cref{eq:contribution-delta}, we define the \emph{change} of the frequency mass of~$S_\kappa$ as
\begin{equation}
    \label{eq:filtered-change}
    \bigl(\mu^{(t)}_{i \mid U}(S_\kappa)\bigr)_{t \in \N} = \bigl(\mu^{(U_0)}_i(S_\kappa) + \sum\nolimits_{s \in [t]} \Delta^{(U_s)}_i(S_{\kappa})\bigr)_{t \in \N} ,
\end{equation}
where we choose for~$U$ again either~$\randomWalkStoppingTime{i}{t'}[]$ or~$\biasedStoppingTime{i}{t'}[]$.

Next, we analyze
the change in random-walk and biased
steps more closely.

\textbf{Bounds on Frequency Decrease due to
    Genetic Drift.}
We now formulate two statements that we use to bound unwanted effects of
genetic drift. The first one  (\Cref{thm:rcgaNeutralConcentration})
applies \Cref{thm:martingaleConcentration} to
intervals of frequencies of the
\mbox{\rcga} under a martingale assumption, \ie,
for random-walk steps. The second statement (\Cref{lem:contribution-random-walk-steps-kappastar}) is a concrete
application of \Cref{thm:rcgaNeutralConcentration} in the setting of our main theorem, considering the contribution of
random-walk steps.

\begin{restatable}{theorem}{rcgaNeutralConcentration}
    \label{thm:rcgaNeutralConcentration}
    Let $r \in \N_{\geq 2}$, let~$f$ be an $r$-valued fitness function of dimension $n \in \N_{\geq 1}$.
    Consider the \rcga optimizing~$f$, with an arbitrary well-behaved~$K$.
    Let $t' \in \N$ be a random iteration.
    Let $i \in [n]$, let $J \subseteq [0 .. r - 1]$ with $J \neq \emptyset$, and for all $t \in \N$, let $P_t = \sum_{j \in J} \freq{t' + t}{i}{j}$.
    Furthermore, let $(U_t)_{t \in \N}$ be a sequence of increasing stopping times, each with respect to the natural filtration of $(\freq{t' + t}{}{})_{t \in \N}$, and assume that $(P_{U_t})_{t \in \N}$ is a martingale.
    Last, let $\alpha \in \R_{\geq 0}$ and $\beta = \max \set[\big]{\min \set{(1 - \alpha) P_{U_0}, 1 - (1 - \alpha) P_{U_0}}, \min \set{(1 + \alpha) P_{U_0}, 1 - (1 + \alpha) P_{U_0}}}$.
    Then, for all $t \in \N$, we have
    \begin{equation*}
        \prob[\big]{\exists s \in [0 .. t]\colon \abs*{P_{U_s} - P_{U_0}} \geq \alpha P_{U_0} \given P_{U_0}}
        \leq 2 \exp\left(-\frac{3 (\alpha P_{U_0} K)^2}{4 \cdot \max \{6 t \beta, \alpha P_{U_0} K\}}\right) .
    \end{equation*}
\end{restatable}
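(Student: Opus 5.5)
The plan is to apply \Cref{thm:martingaleConcentration} to the martingale $X_s \coloneqq P_{U_s}$, $s \in \N$, with respect to the filtration $\mathcal{F}_s$ generated by the process up to the stopping time $U_s$. The event we want to bound is exactly the one controlled by that theorem with $\varepsilon = \alpha P_{U_0}$. We work conditionally on $P_{U_0}$, which is $\mathcal{F}_0$-measurable, so $\expect{X_0} = P_{U_0}$ in the conditional space. The martingale is bounded because $P_{U_s} \in [0,1]$. What remains is to determine the constants $b$ and $\hat v_t$.

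For $b$: in a single iteration of the \rcga, row~$i$ changes by $+\frac1K$ on one value and by $-\frac1K$ on another, or not at all. Hence the mass $P$ of any set~$J$ changes by at most $\frac1K$ per iteration. Between $U_{s-1}$ and $U_s$, however, several iterations may pass, so the increment $P_{U_s}-P_{U_{s-1}}$ need not be a single step. In the intended application the relevant steps are the consecutive steps of the chosen type, so I would argue that the increment is one $\pm\frac1K$ update. Strictly, \cref{eq:filtered-change} takes a single step $\mu^{(U_t)}-\mu^{(U_t-1)}$ per stopping time. I would therefore read $P_{U_s}$ as the filtered process, whose increments are bounded by $b = \frac1K$.

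For the variance: each increment takes values in $\{-\frac1K, 0, \frac1K\}$, so its conditional variance is at most the conditional probability that $P$ changes, times $\frac1{K^2}$. The mass $P$ can only change if exactly one of the two samples at position~$i$ lies in~$J$. That event has probability at most $2P(1-P) \le 2\min\{P, 1-P\}$, evaluated at the current value of $P$. This gives a variance bound of $2\min\{P_{U_{s-1}}, 1-P_{U_{s-1}}\}/K^2$.

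The main obstacle is bounding this uniformly over the whole time horizon, since $\hat v_t$ is a supremum over paths. The key trick is to stop the martingale at the first time it leaves $[(1-\alpha)P_{U_0}, (1+\alpha)P_{U_0}]$. Before stopping, the function $\min\{x, 1-x\}$ is concave, so on this interval it is at most its value at an endpoint or at~$\frac12$; bounding by the endpoints gives at most~$\beta$. The case where $\frac12$ lies inside the interval needs a small extra check, possibly bounding by $\frac12$ with a constant adjustment. The stopped process is still a martingale with the same jump bound. On the event in question, the stopped process reaches deviation at least $\alpha P_{U_0}$ up to a jump of $\frac1K$, and the event coincides for the original and the stopped process. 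Thus $\hat v_t \le 2t\beta/K^2$.

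Plugging $\varepsilon = \alpha P_{U_0}$, $b = \frac1K$, and $\hat v_t \le \frac{2t\beta}{K^2}$ into \Cref{thm:martingaleConcentration} gives the exponent
\[
\frac{(\alpha P_{U_0})^2}{4t\beta/K^2 + 2\alpha P_{U_0}/(3K)} = \frac{3(\alpha P_{U_0}K)^2}{12t\beta + 2\alpha P_{U_0}K}.
\]
Since $12t\beta + 2\alpha P_{U_0} K \le 4\max\{6t\beta, \alpha P_{U_0}K\}$, the claimed bound follows. The factor $2$ in front comes directly from \Cref{thm:martingaleConcentration}.
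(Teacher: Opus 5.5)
Your route is the paper's: \Cref{thm:martingaleConcentration} applied to the process stopped on leaving $[(1-\alpha)P_{U_0},(1+\alpha)P_{U_0}]$, with $b=\frac1K$, $\hat v_t\le 2t\beta/K^2$, and the same final simplification. Your filtered reading of $P_{U_s}$ is also what the paper implicitly assumes when it takes $b=\frac1K$ and plugs in a single update.

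\textbf{First gap.} The step you flagged is a genuine gap, and a ``constant adjustment'' cannot close it. If $\frac12$ lies inside the interval, then $\max\min\{x,1-x\}=\frac12$, while $\beta$ can be arbitrarily small. The statement then actually fails. Take $r=2$, constant $f$, $U_t=t$, $P_{U_0}=\frac12$ and $\alpha=1$. Then $\beta=0$, so the right-hand side is $2\mathrm{e}^{-3K/8}$ for every $t$. Yet the frequency is absorbed at $0$ or $1$ almost surely, so the left-hand side tends to $1$. The paper's proof has the same hole: it reads off the bound $\frac{2}{K^2}\beta$ directly from $P$ lying in the interval.

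For $\alpha\le\frac12$, the only case used later ($\alpha=1/\kappa^*$ in \Cref{lem:contribution-random-walk-steps-kappastar}), the repair is to keep $2P(1-P)$ rather than relaxing it to $2\min\{P,1-P\}$. If $\frac12$ is in the interval, then $\beta=\max\{(1-\alpha)P_{U_0},\,1-(1+\alpha)P_{U_0}\}\ge\frac{1-\alpha}{2}\ge\frac14\ge P(1-P)$; the middle inequality follows by taking the convex combination with weights $\frac{1+\alpha}{2}$ and $\frac{1-\alpha}{2}$. Otherwise $\min\{x,1-x\}$ is monotone on the interval, and your endpoint bound is valid.

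\textbf{Second gap.} There is a further, unflagged gap in the same variance step. The bound $\frac{2}{K^2}P(1-P)$ is the second moment of an unconditioned single update at the current actual mass. An increment taken at a stopping time $U_s$ is a step selected by the stopping rule, so this bound does not transfer. Moreover, under your filtered reading, the actual mass at time $U_s-1$ need not equal the stopped filtered value.

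For a concrete failure, take constant $f$ and let $U_s$ be the successive times at which $P$ changes. Then $(P_{U_s})$ is a simple symmetric $\pm\frac1K$ walk with conditional variance exactly $\frac1{K^2}$ per step, whatever $P$ is. Choose $P_{U_0}=\frac1r$ with $r\ge 3$, $\alpha=\frac12$, $t=(K/(2r))^2$ and $K\ge 2r^2/9$. The claimed bound then equals $2\mathrm{e}^{-r/12}$, while by the central limit theorem the true probability tends to $\Pr[|Z|\ge 1]\approx 0.32$ for standard normal $Z$. So for large $r$ the statement is violated.

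Hence your argument, like the paper's, proves the statement only when each selected increment is a plain update at the current mass (e.g.\ $U_t=t$). In general you need an additional hypothesis bounding the conditional second moment of each selected increment by $\frac{2}{K^2}\beta$.
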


\begin{restatable}{lemma}{contributionRandomWalkStepsKappastar}
    \label{lem:contribution-random-walk-steps-kappastar}
    Recall the notation from \cref{eq:probabilityMassNotation,eq:intervalHierarchy,eq:random-walk-stopping-times,eq:filtered-change}.
    Let $r \in \N_{\geq 3}$, and let $\cstar, \cstop \in \R_{> 0}$ be sufficiently large constants.
    Consider the \rcga optimizing \globalOneMax, with a well-behaved $K \geq \cstar r \sqrt{n} \ln^2(n) \ln^2(r)$.
    Let $i \in [n]$, let $t' \in \N$ be a random iteration, let $\nu \in [0 .. \kappa^*]$, where $\kappa^*\ge 2$, and assume that $\mu^{(t')}_i(S_{\nu}) \geq \frac{\eulerE[-3]}{r}$.
    Then, abbreviating $\tstop = \cstop K  \sqrt{n} \ln n$, we have
    \begin{align*}
         & \prob*{
            \exists s \in [0 .. \tstop]\colon
            \abs*{\mu_{i \mid \randomWalkStoppingTime{i}{t'}[]}^{(t' + s)}(S_{\nu}) - \mu_{i \mid \randomWalkStoppingTime{i}{t'}[]}^{(t')}(S_{\nu})}
            \geq \frac{1}{\kappa^*} \mu_{i \mid \randomWalkStoppingTime{i}{t'}[]}^{(t')}(S_{\nu})
            \given \mu_{i \mid \randomWalkStoppingTime{i}{t'}[]}^{(t')}(S_{\nu})
        }                                          \\
         & \quad \leq 2 n^{-\cstar /(3857\cstop)}.
    \end{align*}
\end{restatable}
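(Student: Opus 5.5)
We apply \Cref{thm:rcgaNeutralConcentration} with $J = S_\nu$, stopping times $U = \randomWalkStoppingTime{i}{t'}[]$, and $\alpha = 1/\kappa^*$.

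\textbf{Martingale property.} First we check that the filtered process $\bigl(\mu_{i \mid \randomWalkStoppingTime{i}{t'}[]}^{(t'+s)}(S_\nu)\bigr)_s$ is a martingale. In a random-walk step, position~$i$ does not determine the ranking of the two samples. Conditional on the step being random-walk, the two values at position~$i$ are therefore exchangeable with respect to which one is declared the winner. Hence the expected change of $\mu_i(S_\nu)$, which is $\frac{1}{K}(\indic{y^{(1)}_i\in S_\nu}-\indic{y^{(2)}_i\in S_\nu})$, is zero.

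This needs some care, because the event $\mathcal{E}_i$ depends on the values at position~$i$. I would argue via the symmetric coupling that swaps the two position-$i$ values. In a step where $|x^{(1)}_i - x^{(2)}_i|$ does not exceed the absolute fitness difference elsewhere (or ties are broken uniformly), the swap preserves the event and exchanges winner and loser. This gives zero drift.

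\textbf{Step count and variance.} We only need to control at most $\tstop$ random-walk steps within $\tstop$ iterations, so we take $t = \tstop$ in \Cref{thm:rcgaNeutralConcentration}. We have $P_{U_0} \ge \eulerE[-3]/r$, and $\beta \le (1+\alpha)P_{U_0} \le 2P_{U_0}$, since $\beta$ is at most the smaller of the two mass terms. The exponent is then
\begin{equation*}
\frac{3(\alpha P_{U_0} K)^2}{4\max\{6\tstop\beta,\ \alpha P_{U_0}K\}}.
\end{equation*}

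\textbf{Case $6\tstop\beta$ dominates.} The exponent is at least
\begin{equation*}
\frac{3\alpha^2 P_{U_0}^2K^2}{48\,\tstop P_{U_0}} = \frac{\alpha^2 P_{U_0} K}{16\,\cstop\sqrt n\ln n}.
\end{equation*}
Plugging in $\alpha = 1/\kappa^*$ with $\kappa^* \le \ceil{\log_{3/2}(r-1)} \le c\ln r$, together with $P_{U_0} \ge \eulerE[-3]/r$ and $K \ge \cstar r\sqrt n\ln^2 n\ln^2 r$, gives at least
\begin{equation*}
\frac{\eulerE[-3]\cstar \ln n}{16\,c^2\cstop}.
\end{equation*}
This is exactly where the factors $r$ and $\ln^2 r$ in the lower bound on $K$ are consumed; one factor $\ln n$ is spent here and the other becomes the exponent.

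\textbf{Case $\alpha P_{U_0} K$ dominates.} The exponent is $\tfrac34\alpha P_{U_0}K$, which is even larger, at least of order $\cstar\sqrt n\ln^2 n$.

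\textbf{Main obstacle.} Tracking constants so that the combined constant, including $c$ from $\log_{3/2}$ and $r \ge 3$ for small $r$, comes out at $3857$ is routine. The real obstacle is the rigorous martingale claim for the filtered process. The event $\mathcal{E}_i$ is not independent of position~$i$, and the process is indexed by stopping times, so optional stopping must be justified; this is fine because the increments are bounded by $1/K$. I would handle the dependence with the swap argument above, written as a conditional expectation given $\freq{U_{s}-1}{}{}$ and the event $\mathcal{E}_i$.
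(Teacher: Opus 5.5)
Your proposal is essentially the paper's proof. You apply \Cref{thm:rcgaNeutralConcentration} with $J = S_\nu$, $\alpha = 1/\kappa^*$ and $t = \tstop$, bound $\beta$ by a constant times $P_{U_0}$, and insert $K$, $P_{U_0} \ge \mathrm{e}^{-3}/r$ and $\alpha \ge 1/(c\ln r)$. The paper additionally notes $6\tstop\beta \ge \alpha P_{U_0} K$, so only your first case arises.

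On constants: the paper uses $\kappa^* \ge 2$, i.e.\ $\alpha \le 1/2$, to get $\beta \le \tfrac32 P_{U_0}$. Together with the crude $\kappa^* \le 4\ln r$ this gives $192\,\mathrm{e}^{3} < 3857$. With your $\beta \le 2P_{U_0}$ and $c = 4$ you would only get $256\,\mathrm{e}^{3} \approx 5142$. You therefore need the sharper $\kappa^* \le \log_{3/2}(r-1) + 1 \le 3.46\ln r$, which holds for $r \ge 3$, and then $16\cdot 3.46^2\,\mathrm{e}^3 < 3857$.

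You go beyond the paper on the martingale property, which the paper simply asserts from the definition of random-walk steps. Your swap argument is correct in two situations. If $\abs{x^{(1)}_i - x^{(2)}_i} < \abs{D_i}$, the comparison is decided elsewhere, so swapping the position-$i$ values swaps the values of winner and loser. On exact fitness ties, the uniform tie-break alone gives zero drift.

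It fails, however, in the boundary case $\abs{x^{(1)}_i - x^{(2)}_i} = \abs{D_i} > 0$. Your ``does not exceed'' formulation includes this case, as does the paper's \cref{eq:random-walk-event}. If the larger position-$i$ value lies in the sample with the larger remaining sum, it wins outright. The swap maps this configuration to a fitness tie settled by the coin, so winner and loser are not exchanged. Averaged over the two equally likely arrangements, the larger value wins with probability $3/4$, which is nonzero drift.

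These outright-win boundary configurations are exactly steps where position~$i$ decides the ranking, so they must be classified as biased. Take random-walk steps to be $\abs{x^{(1)}_i - x^{(2)}_i} < \abs{D_i}$, or $x^{(1)}_i = x^{(2)}_i$, or a fitness tie. With that convention, your argument does establish the martingale property the paper takes for granted.
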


\textbf{Bounds on Increases
    due to Biased Steps.}
Next, we state the
central \Cref{lem:adequate-biased-growth}. Roughly speaking, using the notation from \cref{eq:intervalHierarchy}, it shows
that frequency mass in intervals
$S_\nu$ and $S_{\nu+1}$ stays roughly
in the same ratio throughout a phase
that moves probability mass into $S_\nu$
from even smaller values. To prove this, the
following \Cref{thm:success-upper-parts}  is crucial. It deals
with the self-reinforcing effect of
unwanted steps; more precisely, if probability mass is moved into $S_{\nu}\setminus S_{\nu+1}$, then
this increases the probability of biased
steps sampling a value in that interval
and increasing this mass even further.
However, Chernoff-type arguments can
be used to limit this effect. The
theorem is formulated in a general way
dealing with time-dependent success
probabilities and may be of independent
interest.

\begin{restatable}{theorem}{successUpperParts}
    \label{thm:success-upper-parts}
    Let $p\in [0,1]$ and $\eta,\rho \ge  0$.
    Let $X_t$, $t\ge 1$, be a sequence of independent Bernoulli trials. Let $Z_t\coloneqq \sum_{i=1}^t X_i$. Assume that
    \begin{equation}
        \label{eq:success-upper-parts:condition}
        p_t\coloneqq \prob{X_t=1\given Z_t} \ge \frac{\rho p+ \eta Z_{t}}{\rho + \eta t}.
    \end{equation}
    Let $0<\delta<1$ and assume $\tfrac{\eta t}{\rho+\eta t}\le b < 1$.  Then for  $\mu\coloneqq tp$ it holds that
    \[
        \prob{Z_t \le (1-\delta) \mu}  \le
        t \exp( -(1-b\delta)(1-b)^2\delta^2 \mu / 2).
    \]
\end{restatable}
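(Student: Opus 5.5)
The plan is a self-consistency (bootstrapping) argument. A union bound over time steps explains the prefactor~$t$. The exponent $(1-b\delta)(1-b)^2\delta^2\mu/2$ is exactly a multiplicative Chernoff lower tail with relative deviation $\varepsilon\coloneqq(1-b)\delta$ around a mean of at least $(1-b\delta)\mu$. I read the condition \eqref{eq:success-upper-parts:condition} as a bound on the success probability of trial~$s$ given the past, that is, $p_s \ge (\rho p + \eta Z_{s-1})/(\rho+\eta (s-1))$ with $Z_0=0$. This is the only reading under which the process is adapted. The trials are then dependent through their success probabilities, and ``independent'' should be understood as conditionally independent given the past.

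\textbf{Step 1 (the target rate is self-sustaining).} Suppose the average success count up to time $s-1$ is at least $(1-\delta)p$, that is, $Z_{s-1}\ge(1-\delta)(s-1)p$. Then
\[
p_s \ge p\cdot\frac{\rho+\eta(1-\delta)(s-1)}{\rho+\eta(s-1)} = p\Bigl(1-\delta\,\frac{\eta(s-1)}{\rho+\eta(s-1)}\Bigr)\ge p(1-b\delta),
\]
where the last step uses that the weight $\eta(s-1)/(\rho+\eta(s-1))$ is increasing in~$s$ and hence at most $\eta t/(\rho+\eta t)\le b$. Without any assumption on $Z_{s-1}$ we still have $p_s\ge\rho p/(\rho+\eta (s-1))\ge(1-b)p$.

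\textbf{Step 2 (martingale comparison).} Let $A_s\coloneqq\sum_{u\le s}p_u$. The process $Z_s-A_s$ is a martingale. I will use a Chernoff lower-tail bound for sums of Bernoulli variables with adapted success probabilities, for instance via the supermartingale $\exp(-\lambda Z_s+(1-\eulerE[-\lambda])A_s)$. This should give, for each fixed~$s$ and each threshold $m$ with $A_s\ge m$, that $\prob{Z_s\le(1-\varepsilon)m \wedge A_s\ge m}\le\exp(-\varepsilon^2 m/2)$. Define the good event $G$ that for every $s\le t$ we have $Z_s>(1-\varepsilon)A_s$, or at least the weakened form $Z_s>(1-\varepsilon)(1-b\delta)sp$ whenever $A_s\ge(1-b\delta)sp$.

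\textbf{Step 3 (induction on the good event).} On~$G$, I will show by induction on~$s$ that $A_s\ge(1-b\delta)sp$ and $Z_s\ge(1-\delta)sp$. The key identity is $(1-\varepsilon)(1-b\delta)=(1-(1-b)\delta)(1-b\delta)=1-\delta+b(1-b)\delta^2\ge1-\delta$. Hence $Z_s\ge(1-\varepsilon)A_s$ together with $A_s\ge(1-b\delta)sp$ gives $Z_s\ge(1-\delta)sp$, and by Step~1 this feeds back into $p_{s+1}\ge(1-b\delta)p$ and thus into $A_{s+1}$. At $s=t$ this gives $Z_t\ge(1-\delta)\mu$. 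So $\{Z_t\le(1-\delta)\mu\}\subseteq\overline G$. A union bound over the $t$ time steps, each bounded via Step~2 with $m=(1-b\delta)sp$, would then give $t\exp(-(1-b\delta)(1-b)^2\delta^2\mu/2)$.

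\textbf{Main obstacle.} For times $s<t$ the Chernoff exponent involves $sp$ rather than $\mu=tp$, so the per-time bound is weaker than claimed for small~$s$. My fix is not to require the good event at early times on its own scale. Instead I will run a single exponential supermartingale at the final scale: with $\lambda$ tuned to $\mu$, bound $\prob{\exists s\le t\colon (1-\varepsilon)A_s - Z_s \ge \varepsilon' \text{-slack}}$ by Ville's maximal inequality, or by a union bound over~$s$ of the supermartingale evaluated at~$s$ with the $\mu$-level threshold. Here I use that a shortfall at an early time $s$ only needs to be measured against the additive slack $\varepsilon(1-b\delta)sp$, which I will convert to the deviation the $t$-level bound controls. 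If this bookkeeping fails, the fallback is to apply Step~2 only at the last time $s$ where $Z_s\ge(1-\delta)sp$ fails to propagate. That time has $A$ close to linear, and the remaining stretch is handled with the unconditional rate $(1-b)p$ from Step~1.
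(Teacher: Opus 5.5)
Your outline follows the paper's proof. The paper also shows that staying on track forces $p_s\ge(1-b\delta)p$ (your Step~1), applies a Chernoff bound with this pessimistic rate via $(1-b\delta)(1-(1-b)\delta)\ge1-\delta$, and takes a union bound over the $t$ steps. However, the point you call the main obstacle is a genuine gap, and the paper's write-up passes over it. It bounds every one of the $t$ union-bound terms by $\exp(-(1-b\delta)(1-b)^2\delta^2tp/2)$, and it derives $p_i\ge(1-b\delta)p$ from $Z_t\ge(1-\delta)tp$.

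With the relative condition $Z_{s-1}\ge(1-\delta)(s-1)p$, your event $G$ is not likely at all. At $s=1$ it needs $X_1=1$, which can fail with probability as large as $1-p$. The per-time bounds $\exp(-\Theta(\delta^2sp))$ do not sum to anything like $t\exp(-\Theta(\delta^2\mu))$. Neither of your repairs closes this as sketched:
\begin{itemize}
\item Your first repair points the right way, since it uses one maximal inequality at scale $\mu$. But the slack you plan to use, $\varepsilon(1-b\delta)sp$, is still on the $s$-scale. No choice of $\lambda$ makes an early deficit of that size rare at scale $\mu$. Moreover, $A_s$ is unbounded above, because $p_s$ is only bounded below.
\item In the fallback, if $Z_{s_0}$ is arbitrary, the later stretch at rate $(1-b\delta)p$ only recovers $(1-\delta)\mu$ for $s_0$ up to order $\delta^2t$. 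Being off track at such an $s_0$ is then only bounded by $\exp(-\Theta(\delta^4\mu))$.
\item The unconditional rate $(1-b)p$ is useless once $b>\delta$, which is the regime of the application.
\end{itemize}

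The missing idea is that your Step~1 discards the prior weight $\rho$. Exactly, $p_s\ge(1-b\delta)p$ holds whenever $Z_{s-1}\ge(1-b\delta)(s-1)p-b\delta p\rho/\eta$. The assumption $\eta t/(\rho+\eta t)\le b$ gives $\rho/\eta\ge(1-b)t/b$. So, with $q=(1-b\delta)p$ and $a=(1-b)\delta\mu$, it suffices that $Z_{s-1}\ge(s-1)q-a$. This is an \emph{additive} slack of order $\mu$, uniformly in $s$, early times included.

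Now let $\tau$ be the first $s$ with $p_s<q$. For $s<\tau$, couple $X_s\ge X''_s$ with i.i.d.\ Bernoulli$(q)$ variables, using common uniforms.
\begin{itemize}
\item If $\tau\le t$, the sufficient condition fails at $s=\tau$, so $Z''_{\tau-1}\le Z_{\tau-1}<(\tau-1)q-a$.
\item If $\tau>t$ and $Z_t\le(1-\delta)\mu$, then $Z''_t\le tq-a$, because $(1-\delta)\mu=tq-a$ exactly.
\end{itemize}
Hence $\Pr[Z_t\le(1-\delta)\mu]$ is at most the probability that the i.i.d.\ walk $Z''$ drops $a$ below its mean at some time $m\le t$. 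Doob's maximal inequality for the nonnegative submartingale $\exp(-\lambda(Z''_m-mq))$ bounds this by the ordinary Chernoff bound at time $t$. That gives $\exp(-a^2/(2tq))=\exp(-(1-b)^2\delta^2\mu/(2(1-b\delta)))$, which is already below the claimed bound even without the factor $t$. This uses your adapted reading of the hypothesis, with the bound for trial $s$ in terms of $Z_{s-1}$ and $s-1$.
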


We now present the crucial lemma dealing
with the relative frequency growth.

\begin{restatable}{lemma}{adequateBiasedGrowth}
    \label{lem:adequate-biased-growth}
    Recall the notation from \cref{eq:probabilityMassNotation,eq:intervalHierarchy}.
    Let $r \in \N_{\geq 3}$ with $r = \bigO{n^{1 / 6-\epsilon}}$
    for a constant $\epsilon \in \R_{>0}$, and let $\cstar, \cstop \in \R_{> 0}$ be sufficiently large constants.
    Consider the \rcga optimizing \globalOneMax, with a well-behaved $K \geq \cstar r \sqrt{n} \ln^2(n)\ln^2(r)$.
    Let $i \in [n]$, let $t' \in \N$ be a random iteration, and assume that there is a $\psi \in [0 .. \kappa^* - 1]$ such that $\mu_i^{(t')}(S_\psi)=1$ and $\mu_i^{(t')}(K_\psi) > 0$. Furthermore,  assume for all $\nu \in [\psi + 1 .. \kappa^* - 1]$ that $\mu^{(t')}_i(S_{\nu}) \ge (1-1/\kappa^*)^{3\kappa^*-3}/r$.
    Then, abbreviating $\tstop \leq \cstop K \sqrt{n} \ln n$, for each constant $\cphase \in \R_{> 0}$, we have for all $\nu \in [\psi + 1 .. \kappa^* - 1]$ and all $t\in [0..\tstop]$ that
    \begin{align*}
         & \prob*{
            \frac{\mu_{i}^{(t'+t)}(S_{\nu + 1})}
            {\mu_{i}^{(t'+t)}(S_{\nu})}
            <
            \left(1 - \frac{1}{\kappa^*}\right)^{3}
            \cdot   \frac{\mu_{i}^{(t')}(S_{\nu + 1})}
            {\mu_{i}^{(t')}(S_{\nu})}
            \given
            t'
        }                                                                                           \\
         & \quad \leq \tstop\exp(-\bigOmega{p n^{\epsilon/2}}) + 2\tstop n^{-\cstar /(3857\cstop)},
    \end{align*}
    where $p=\frac{(1-1/\kappa^*)\mu_{i}^{(t')}(S_{\nu + 1})}
        {(1+1/\kappa^*)\mu_{i}^{(t')}(S_{\nu})}$.
\end{restatable}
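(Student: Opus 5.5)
We split the evolution of $\mu_i(S_\nu)$ and $\mu_i(S_{\nu+1})$ during $[t'..t'+\tstop]$ into the contributions of random-walk steps and biased steps, using the stopping times $\randomWalkStoppingTime{i}{t'}[]$ and $\biasedStoppingTime{i}{t'}[]$ and the decomposition in \cref{eq:filtered-change}. Then
\[
\mu_i^{(t'+t)}(S) - \mu_i^{(t')}(S) = \bigl(\mu^{(\cdot)}_{i\mid R}(S)-\mu^{(t')}_{i}(S)\bigr) + \bigl(\mu^{(\cdot)}_{i\mid \overline R}(S)-\mu^{(t')}_{i}(S)\bigr).
\]
For the random-walk part we apply \Cref{lem:contribution-random-walk-steps-kappastar} to both $S_\nu$ and $S_{\nu+1}$. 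Its hypothesis $\mu^{(t')}_i(S_{\nu})\ge \eulerE[-3]/r$ follows from our assumption, since $(1-1/\kappa^*)^{3\kappa^*-3}\ge \eulerE[-3]$. A union bound over $t\le\tstop$ shows that, except with probability $2\tstop n^{-\cstar/(3857\cstop)}$, the random-walk contributions shift each mass by at most a factor $1\pm 1/\kappa^*$. This accounts for one factor $(1-1/\kappa^*)$ in the numerator and one factor $(1+1/\kappa^*)$ in the denominator, which together explain the shape of $p$.

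For the biased part we use the fact that a biased step only moves mass towards larger values. Hence biased steps never decrease $\mu_i(S_{\nu+1})$ more than they move mass into $S_\nu$. More precisely, every biased step that increases $\mu_i(S_\nu)$ takes mass from $S_\psi\setminus S_\nu$ and places it at the value of the better sample. That value either lies in $S_{\nu+1}$ (good) or in $K_\nu$ (bad). Steps moving mass from $K_\nu$ to $S_{\nu+1}$ only help the ratio. So it suffices to show that among the $Z$ biased increments entering $S_\nu$, at least a $(1-1/\kappa^*)p$-fraction (roughly) land in $S_{\nu+1}$.

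Conditional on an increment entering $S_\nu$, the winning value is sampled from $\freq{}{i}{}$ restricted to $S_\nu$, possibly further conditioned on exceeding the loser. Conditioning on exceeding a value below $\ell_\nu$ does not change the relative weights within $S_\nu$, and this is the key monotonicity point. So the success probability is at least the current ratio $\mu_i(S_{\nu+1})/\mu_i(S_\nu)$. That ratio is itself $(\rho p+\eta Z_t)/(\rho+\eta t)$-like: the masses are $\rho$-scale initial mass plus $1/K$ times the number of successes, up to the random-walk correction. Here $\rho=\mu^{(t')}_i(S_\nu)K$, and the random-walk fluctuation is absorbed into the $p$ already shrunk by factors $(1\pm1/\kappa^*)$. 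This is exactly condition \eqref{eq:success-upper-parts:condition}, so \Cref{thm:success-upper-parts} applies with $\delta=1/\kappa^*$. It gives that the number of successes is at least $(1-1/\kappa^*)pt$ except with probability $t\exp(-\Omega((1-b)^2\delta^2 pt))$, which yields the third factor $(1-1/\kappa^*)$.

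The main obstacle is making the exponent $\Omega(pn^{\epsilon/2})$ work when few increments occur. When $t$ (the number of biased increments) is small, the concentration is weak, but then the ratio has barely changed: with $\rho\ge K\eulerE[-3]/r\ge \cstar\sqrt n\ln^2 n/\eulerE[3]$, the deviation caused by $t\le\rho/\kappa^*$ increments is at most a $(1-1/\kappa^*)$ factor deterministically. For large $t$ we have $\delta^2pt\gtrsim p\rho/\log^3 r$. Moreover, $b$ must stay bounded away from $1$, i.e., $\eta t/(\rho+\eta t)$ must be bounded, which uses $t\lesssim \tstop\cdot D/K$ together with $\rho$ being large relative to $\sqrt n$. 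The requirement $r=\bigO{n^{1/6-\epsilon}}$ enters precisely here: it ensures that $\rho\delta^2/r^{\bigO{1}}\ge n^{\epsilon/2}$ for the smallest blocks of mass $\Theta(1/r)$. Finally, we take a union bound over $t\in[0..\tstop]$.
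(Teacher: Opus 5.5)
Your plan follows the paper's proof closely. You split into random-walk and biased contributions, and apply \Cref{lem:contribution-random-walk-steps-kappastar} to $S_\nu$ and $S_{\nu+1}$ to get the two factors $1\pm 1/\kappa^*$ in $p$. You treat the $S_\nu$-increasing biased steps as trials whose success is ``lands in $S_{\nu+1}$'', apply \Cref{thm:success-upper-parts} with $\delta=1/\kappa^*$, handle fewer than about $\mu_i^{(t')}(S_\nu)K/\kappa^*$ such steps deterministically, and take a union bound over $t$.

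The step that fails is your control of $b$. You claim that $\eta t/(\rho+\eta t)$ stays bounded away from $1$ because $t\lesssim \tstop\cdot D/K$. If $D$ is the quantity $D_i^{(t')}(\kappa)$ of \Cref{lem:driftBound}, it is a \emph{lower} bound on the probability of a large biased step, so it cannot cap the number of biased steps. No upper bound on the biased-step rate at position~$i$ is available either: if the other positions have concentrated, $D^{(t)}_i$ has tiny variance and biased steps can be frequent. Even a rate of order $1/\sqrt n$ over $\tstop=\cstop K\sqrt n\ln n$ iterations would allow $\Theta(K\ln n)$ increments. In your scaling $\rho=\mu_i^{(t')}(S_\nu)K$, that means $t/\rho$ of order $r\ln n$. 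That $\rho$ is large compared to $\sqrt n$ does not help, since $t$ and $\rho$ both scale with $K$. Nothing in your argument prevents $\Theta(1)$ mass from entering a block $S_\nu$ that starts with mass $\Theta(1/r)$.

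What you need instead is a hard cap on $t$ from mass conservation. Each $S_\nu$-increasing biased step adds exactly $1/K$ to $\mu_i(S_\nu)$, and biased steps never remove mass from $S_\nu$. On the good event of \Cref{lem:contribution-random-walk-steps-kappastar}, random-walk steps remove at most $\mu_i^{(t')}(S_\nu)/\kappa^*$, and $\mu_i(S_\nu)\le 1$. Hence $t\le K$ (the paper works with $B\le(1+1/\kappa^*)K$). Together with $\mu_i^{(t')}(S_\nu)\ge \eulerE[-3]/r$, this only gives $b=1-1/(2\eulerE[3]r)$, so the exponent carries a factor $(1-b)^2=\Theta(r^{-2})$. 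At the threshold $t=\Theta(K/(r\ln r))$ the exponent is $\bigOmega{\delta^2 pK/(r^3\ln r)}$. This is exactly where $r=\bigO{n^{1/6-\epsilon}}$ and $K=\bigOmega{\sqrt n}$ are used to reach $\exp(-\bigOmega{pn^{\epsilon/2}})$. Your closing remark about an $r^{\bigO{1}}$ loss is compatible with this, but it contradicts your bounded-$b$ claim. Redo the exponent computation with $1-b=\Theta(1/r)$.

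A smaller point: to get success probability at least $\mu_i(S_{\nu+1})/\mu_i(S_\nu)$, conditioning on the winner exceeding the loser is not the only effect. The bias event is also more likely when the gap between winner and loser is larger. This tilts the winner toward $S_{\nu+1}$, so your inequality survives, but you should say so.
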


\subsection{Combining Everything}



The proof of \Cref{thm:rcga-on-gom} proceeds as explained in \Cref{sec:building-blocks}.
For space reasons, we only provide a proof sketch, recalling that we heavily rely on the notation from \cref{eq:intervalHierarchy,eq:probabilityMassNotation}.

As we explained before, the proof operates by considering~$\kappa^*$ phases, where a phase is defined per position $i \in [n]$ and per phase index $\kappa \in [0 .. \kappa^* - 1]$.
A phase starts in iteration $t' \in \N$ once~$K_\kappa$ is the lowest-index interval with non-zero probability, \ie, $\mu_i^{(t')}(S_\kappa) = 1$ and~$\kappa$ is the smallest index such that $\mu_i^{(t')}(K_\kappa) > 0$.
The phase ends once~$K_\kappa$ has a probability mass of~$0$.

For each phase but constantly many phases at the end, we apply \Cref{lem:driftBound}, assuming that the conditions for drift being present are satisfied for order $K \sqrt{n} \ln n$ iterations.
By an application of \Cref{thm:multiplicative-drift-concentration-with-failure}, we then obtain that the phase ends with high probability.
Via a union bound over all~$n$ values for~$i$, the same is true for all positions.
For the remaining phases, where we only consider constantly many different values, we apply \Cref{thm:rcga-runtime-adak-witt}, which is better than our bound in these cases.

Last, we show that the assumptions of \Cref{lem:driftBound} are met, which we do by applying \Cref{lem:adequate-biased-growth} inductively.
For the base case, the ratio of probability mass of neighboring $S$-intervals is just the ratio of the interval sizes, since all frequencies start at~$\frac{1}{r}$.
For the other cases, we apply \Cref{lem:contribution-random-walk-steps-kappastar}, showing that the probability mass in each $S$-interval we consider per phase is with high probability sufficiently large in order to apply \Cref{lem:adequate-biased-growth} inductively.
Thus, in each phase, we lose (per position) with high probability at most a fraction of $(1 - \frac{1}{\kappa^*})^3$ of the previous probability mass ratio.
Since we have at most~$\kappa^*-1$ phases before the final application of the lemma, we lose with high probability overall no more than fraction of $(1 - \frac{1}{\kappa^*})^{3 (\kappa^*-1)} \geq \eulerE[-3]$, which is a constant.
Hence, the condition of \Cref{lem:driftBound} is satisfied for all phases (and all positions).
Adding up all error probabilities via a union bound then yields our main result.

\section{Conclusion}
\label{sec:conclusion}

We provided an improved runtime guarantee of the \rcga on the \gonemax benchmark function.
In comparison to the previous result by \citet{AdakW25}, our result is better by a factor of order $r^2 / \bigl(\ln(n) \ln^2(r)\bigr)$.
Our main contribution is that we prove that the runtime does not scale cubicly in~$r$ but only linearly (up to polylogarithmic factors).
We achieve this by conducting a more careful analysis of how frequencies behave when subject to random updates and how likely it is for a frequency to receive a useful, biased update.

In comparison to the well-known result of the (binary) cGA on the (binary) \OneMax benchmark, we lose a logarithmic factor in~$n$ due to relying on a union bound for all positions via the multiplicative drift theorem.
In turn, this results in a larger lower bound for~$K$, which also affects the runtime.

Our scaling in~$r$ can likely be improved by combining phases more meticulously.
In the binary case, the potential function considers the distance of all highest-value frequencies to their maximum value, which results in variable drift.
In contrast, we still consider all positions independently.

With \Cref{thm:multiplicative-drift-concentration-with-failure,thm:success-upper-parts} and \Cref{lem:probabilityShiftingLowerBound}, we introduce in our analysis three general statements that are likely to be of independent interest.

Future work could focus on improving how to combine phases.
In particular, our crucial \Cref{lem:adequate-biased-growth} currently requires an upper bound on~$r$ that is very likely just an artifact of our analysis and can be improved.

\section*{Acknowledgments}
Martin~S. Krejca was supported by the French National Agency for Research (ANR) via the JCJC grant titled \emph{MultiEDA} (ANR-25-CE23-2418-01) and by the EuroTech Visiting Researcher program~2025. Carsten Witt was supported
by the  Independent Research Fund Denmark via grant 10.46540/2032-00101B.
This paper greatly benefitted from discussions at the \href{https://www.dagstuhl.de/25092}{Dagstuhl Seminar 25092}.

\printbibliography

\newpage
\appendix
\section{Appendix of ``Runtime Analysis of a Compact Genetic Algorithm on a Truly Multi-valued OneMax Function''}
\label{sec:appendix}

We provide all the proofs that are omitted in the main paper.
For the sake of convenience, we repeat the statements and order them the same way as in the main paper.

\subsection{Mathematical Tools}

\multiDriftWithFailure*

\begin{proof}
    We only prove the first part of the theorem because the second part follows immediately from the first when choosing $t = \ceil{(\ln(\expect{X_0 \cdot \indic{T > 0}} / \xmin) + \gamma) / \delta}$ and estimating $(1 - \delta)^t \leq \exp(-\delta t)$.

    Let $(Y_t)_{t \in \N} \coloneqq (X_t \cdot \indic{t < T})_{t \in \N}$ for convenience, and let $t \in [0 .. t']$.
    The inequality to show follows by noting that the event $\set{T > t}$ is by the definition of~$T$ equivalent to the event $\set{Y_t \geq \xmin}$.
    Making a case distinction with respect to the event $\set{\cap_{s \in [0 .. t]} A_t}$, we obtain
    \begin{align*}
        \prob{T > t}
         & = \prob{Y_t \geq \xmin}                                                                                                        \\
         & = \prob{Y_t \geq \xmin \land \cup_{s \in [0 .. t']} \overline{A_s}} + \prob{Y_t \geq \xmin \land \cap_{s \in [0 .. t']} A_s} .
    \end{align*}
    In the remainder of the proof, we bound the second probability from above.
    To this end, our proof follows the conventional one, as given, for example, by \citet[Theorem~$5$]{LenglerS18DriftRevisited}, noting that only events until time~$t$ are relevant.

    Note that since the event $\set{Y_t \geq \xmin \land \cap_{s \in [0 .. t']} A_s}$ is an intersection and since $\xmin > 0$ by assumption, the event is equivalent to $\set{Y_t \cdot \indic{\cap_{u \in [0 .. t']} A_u} \geq \xmin}$.
    For the convenience of notation, let $(Z_s)_{s \in \N} \coloneqq (Y_s \cdot \indic{\cap_{u \in [0 .. t']} A_u})_{s \in \N}$.
    By rearranging \cref{eq:multiplicative-drift-concentration-with-failure:drift-bound}, we obtain $\expect{Z_{t + 1} \given \filt{t}} \leq (1 - \delta) \expect{Z_t \given \filt{t}}$, and thus by the tower rule $\expect{Z_{t + 1}} \leq (1 - \delta) \expect{Z_t}$.
    Iterating this argument, we obtain $\expect{Z_t} \leq (1 - \delta)^t \expect{Z_0}$.
    Noting that~$Z$ is a nonnegative random process, Markov's inequality yields $\prob{Z_t \geq \xmin} \leq \frac{\expect{Z_t}}{\xmin}$.
    Applying the bound on~$\expect{Z_t}$ shows the first inequality.
    The second inequality follows by regarding a superset of the events of either term, thus concluding the proof.
\end{proof}

\subsection{Probability of Biased Steps}

\probabilityShiftingLowerBound*

\begin{proof}
    For the sake of brevity, let~$Q$ denote the left-hand side of \cref{eq:probabilityShiftingLowerBound}.

    We partition the domain~$[m \rho^+]$ of~$q$ into~$m$ subsets, defining for all $k \in [m]$ that $D_k = [(k - 1) \rho^+ + 1 .. k \rho^+]$, noting that $\bigcup_{k \in [m]} D_k = [m \rho^+]$.
    We then show the lower bound of~$\frac{M^2}{2 m}$ for another, easier sum that is contained in~$Q$ and only considers those terms where the two factors of~$q$ have common indices in one of the sets of our partition~$D$ above.

    To this end, we first double each product of~$q$ in~$Q$ except those with identical indices, and we observe that due to the commutativity of multiplication, we have
    \begin{align*}
         & 2 \sum\nolimits_{d \in [0 .. \rho]} \sum\nolimits_{i \in [m \rho^+ - d]} q(i) q(i + d) - \sum\nolimits_{i \in [m \rho^+]} q(i)^2 \\
         & = \sum\nolimits_{i \in [m \rho^+]} \sum\nolimits_{j \in [m \rho^+]} q(i) q(j) \cdot \indic{\abs{i - j} \leq \rho} .
    \end{align*}
    Solving for~$Q$ implies that
    \begin{equation}
        \label{eq:probabilityShiftingLowerBound:intermediateResult}
        Q
        \geq \frac{1}{2} \sum\nolimits_{i \in [m \rho^+]} \sum\nolimits_{j \in [m \rho^+]} q(i) q(j) \cdot \indic{\abs{i - j} \leq \rho} .
    \end{equation}
    The remaining part of the proof deals with bounding the outer sum on the right-hand side from below.

    As mentioned before, we derive another sum that only considers products of~$q$ whose indices share a set in the partition~$D$.
    Consequently, as indices in the same set of~$D$ are at most~$\rho$ apart and since each product of values of~$q$ is non-negative, due to the non-negativity of~$q$, we obtain
    \begin{align*}
         & \sum\nolimits_{i \in [m \rho^+]} \sum\nolimits_{j \in [m \rho^+]} q(i) q(j) \cdot \indic{\abs{i - j} \leq \rho} \\
         & \quad \geq \sum\nolimits_{k \in [m]} \sum\nolimits_{i \in D_k} \sum\nolimits_{j \in D_k} q(i) q(j) .
    \end{align*}

    In order to simplify the right-hand side above further, for all $k \in [m]$, let $m_k = \sum_{i \in D_k} q(i)$.
    Using this definition and rearranging the sums, since the indices are independent of each other, we obtain
    \begin{align*}
        \sum\nolimits_{k \in [m]} \sum\nolimits_{i \in D_k} \sum\nolimits_{j \in D_k} q(i) q(j)
         & = \sum\nolimits_{k \in [m]} \sum\nolimits_{i \in D_k} q(i) \sum\nolimits_{j \in D_k} q(j) \\
         & = \sum\nolimits_{k \in [m]} m_k^2 .
    \end{align*}

    By the Cauchy--Schwarz inequality, we furthermore see that
    \begin{equation*}
        \sum\nolimits_{k \in [m]} m_k^2
        \geq \frac{1}{m} \bigl(\sum\nolimits_{k \in [m]} m_k\bigr)^2 .
    \end{equation*}

    The proof then follows by recalling that \cref{eq:probabilityShiftingLowerBound:intermediateResult} requires a factor of~$\frac{1}{2}$ as well as that the statement assumes that $\sum_{i \in [m \rho^+]} q(i) = M$, which, since~$D$ is a partition, is equivalent to $\sum_{k \in [m]} m_k = M$.
\end{proof}

\biasedStepLowerBound*

\begin{proof}
    We consider the interval $I \coloneqq \bigl[\floor{\expect{\sumOfSampleValues{t'}{1}{i} \given \freq{t'}{}{}} - 2 \sigma} + 1 .. \ceil{\expect{\sumOfSampleValues{t'}{1}{i} \given \freq{t'}{}{}} + 2 \sigma} - 1\bigr]$ and then apply \Cref{lem:probabilityShiftingLowerBound}.
    First, note that by the assumption on the bounds of~$I$, we have $I \subseteq [0 .. (n - 1) (r - 1)]$.

    For each $d \in [0 .. \Delta]$, let $I_{-d} = \bigl[\floor{\expect{\sumOfSampleValues{t'}{1}{i} \given \freq{t'}{}{}} - 2 \sigma} + 1 .. \ceil{\expect{\sumOfSampleValues{t'}{1}{i} \given \freq{t'}{}{}} + 2 \sigma} - 1 - d\bigr] \subseteq I$.
    By the definition of the quantities from \cref{eq:sumOfSamples}, we see that
    \begin{align*}
         & \prob[\big]{\differenceOfSampleValues{t'}{i} \in [0 .. \Delta]}                                                                                                                                             \\
         & \quad \geq \sum\nolimits_{d \in [0 .. \Delta]} \sum\nolimits_{v \in I_{-d}} \prob{\sumOfSampleValues{t'}{1}{i} = v \given \freq{t'}{}{}} \prob{\sumOfSampleValues{t'}{2}{i} = v + d \given \freq{t'}{}{}} .
    \end{align*}
    Note that for each $d \in [0 .. \Delta]$, we can shift the index~$v$ of the second sum such that it ranges over the interval at least $[4 \sigma - 1 - d]$, which is non-empty by the assumption on~$\sigma$.
    Hence, applying \Cref{lem:probabilityShiftingLowerBound} with $\rho = \Delta$ and $m = \floor{(4 \sigma - 1) / \max \set{1, \Delta}} \geq 1$, and noting that~$\sumOfSampleValues{t'}{1}{i}$ and~$\sumOfSampleValues{t'}{2}{i}$ follow the same distribution, we obtain
    \begin{align*}
        \prob[\big]{\differenceOfSampleValues{t'}{i} \in [\Delta]}
        \geq \frac{\prob{\sumOfSampleValues{t'}{1}{i} \in I \given \freq{t'}{}{}}^2}{2 \floor{(4 \sigma - 1) / \max \set{1, \Delta}}}
        \geq \frac{\prob{\sumOfSampleValues{t'}{1}{i} \in I \given \freq{t'}{}{}}^2 \cdot \max \set{1, \Delta}}{2 (4 \sigma - 1)}
        .
    \end{align*}
    Performing the following estimations and applying Cheybshev's inequality, accounting for rounding, yields
    \begin{align*}
        \prob[\big]{\sumOfSampleValues{t'}{1}{i} \in I \given \freq{t'}{}{}}
         & \geq 1 - \prob[\Big]{\abs[\big]{\sumOfSampleValues{t'}{1}{i} - \expect{\sumOfSampleValues{t'}{1}{i} \given \freq{t'}{}{}}} \geq 2 \sigma \given \freq{t'}{}{}} \\
         & \geq 1 - \frac{1}{4}
        = \frac{3}{4} ,
    \end{align*}
    which concludes the proof.
\end{proof}

\varianceUpperBound*

\begin{proof}
    Note that~$\sumOfSampleValues{t'}{1}{i}$ and~$\sumOfSampleValues{t'}{2}{i}$ follow the same distribution, which is why it is sufficient to focus on one of them.
    To this end, let~$S$ denote a random variable that follows the same distribution as both~$\sumOfSampleValues{t'}{1}{i}$ and~$\sumOfSampleValues{t'}{2}{i}$.

    Since the components of~$\sample{t'}{1}{}$ are independent and thus uncorrelated, it follows that $\var{S \given \freq{t'}{}{}} = \sum_{k \in [n] \smallsetminus \set{i}} \var{\sample{t'}{1}{k} \given \freq{t'}{}{}}$.

    Let $k \in [n] \smallsetminus \set{i}$.
    We use $\var{\sample{t'}{1}{k} \given \freq{t'}{}{}} = \expect{(\sample{t'}{1}{k} - \expect{\sample{t'}{1}{k} \given \freq{t'}{}{}})^2 \given \freq{t'}{}{}}$.
    To this end, note that by the definition of~$\jStar$, we have $\expect{\sample{t'}{1}{k} \given \freq{t'}{}{}} \geq \jStar$.
    Thus, the distance $\abs{\sample{t'}{1}{k} - \expect{\sample{t'}{1}{k} \given \freq{t'}{}{}}}$ of each outcome of~$\sample{t'}{1}{k}$ for values in $[\jStar .. r - 1]$ is at most $r - 1 - \jStar$.
    Consequently, we estimate $\var{\sample{t'}{1}{k} \given \freq{t'}{}{}} \leq (r - 1 -  \jStar)^2 $.
    We conclude the proof by noting that we consider in total $n - 1$ positions, that is, values for~$k$.
\end{proof}

\subsubsection{Large Biased Steps and Drift Bound}

\driftBound*

\begin{proof}
    The overall aim is to bound the probability $\prob{B}$ of a biased step at position~$i$, where we drop the index~$i$ for convenience. We  let
    $G$ be the event that a large step occurs at position~$i$, \ie, one sample has a value of at least~$\ell_{\kappa+2}$
    and the other one a value less than $\ell_{\kappa+1}$ in position~$i$.
    By the assumptions
    of the lemma, we have $\prob{G} \ge \sTstar{t}(1-\sDstarP{t})  \ge \cdrift \sDstarP{t}(1-\sDstarP{t})$. On~$G$, a biased
    step occurs if $\card{\differenceOfSampleValues{t'}{i}} < 5(r-1-\ell_\kappa)/9 - 1 \le \ell_{\kappa+2} - \ell_{\kappa}$, where the $-1$ is an estimate on the rounding of the~$\ell_\kappa$. We set $\Delta$ smaller than $5(r-1-\ell_\kappa)/9 - 1$ by choosing $\Delta = (r-1-\ell_\kappa)/2$ (using $\ell_k \le r-10$) in Lemma~\ref{lem:biasedStepLowerBound} and also  satisfy $(\Delta + 2) / 4 \le \sigma$
    by choosing $\sigma$ large enough. The lemma now bounds
    the probability of a biased
    step at position~$i$ as
    \begin{align*}
        \prob{B} & \ge \prob{G} \prob[\big]{\differenceOfSampleValues{t'}{i} \in [0 .. \Delta]} \ge \cdrift\cdot \sDstarP{t}(1-\sDstarP{t}) \frac{9 \max \set{1, \Delta}}{32 (4 \sigma - 1)}
        \\ & \ge
        \frac{9\cdrift \cdot \sDstarP{t}(1-\sDstarP{t}) \max \set{1, (r-1-\ell_\kappa)/2}}{32 (4 \sigma - 1)} .
    \end{align*}
    Bounding $\sigma$ via Lemma~\ref{lem:varianceUpperBound} with $\jStar=\ell_k$, noting
    that there is no probability mass in $[0..\ell_\kappa-1]$,
    we have
    \[
        \prob{B} \ge    \frac{9\cdrift\cdot \sDstarP{t}(1-\sDstarP{t}) \max \set{1, (r-1-\ell_\kappa)/2}}{32 \Biggl(4
            \sqrt{(n - 1) (r - 1 - \ell_\kappa)^2 } - 1\Biggr)} \eqqcolon D_i^{(t')}(\kappa)
    \]
    The considered biased step will increase a frequency for one of the values in $S_{\kappa+2}$
    of the values in $[0..r-1]\setminus S_{\kappa+1} = [0,\dots,\ell_{\kappa+1}-1]$. Hence,
    \[
        \expect{\sTstar{t'+1} - \sTstar{t'}\given \freq{t'}{}{}}
        \geq \frac{D_i^{(t')}(\kappa) }{K} .
    \]

    The same bound holds for $\expect{\sDstarP{t'+1} - \sDstarP{t'}\given \freq{t'}{}{}}$ since
    the considered steps, which sample a value of at
    least~$\ell_{\kappa+2}$ and a value smaller than $\ell_{\kappa+1}$, increase both $\sDstarP{t'}$ and $\sTstar{t'}$ at the same time but decrease $(1-\sDstarP{t'})$.
\end{proof}

\subsection{Bounds for the Change of Frequency Sums in Phases}

\subsubsection{Bounds on Frequency Decrease due to Genetic Drift}

\rcgaNeutralConcentration*

\begin{proof}
    We aim to apply \Cref{thm:martingaleConcentration} to $(P'_t)_{t \in \N} \coloneqq (P_{U_t})_{t \in \N}$.
    To this end, we only consider a restriction of~$P'$ until it deviates from~$P'_0$ by at least~$\alpha P'_0$, noting that the statement is identical for the original process and for its restriction.

    Let $V = \inf \set{t \in \N \given P'_t \notin \bigl((1 - \alpha) P'_0, (1 + \alpha) P'_0\bigr)}$, and for all $t \in \N$, let $X_t = P'_t \cdot \indic{t < V} + P'_S \cdot \indic{t \geq V}$ be the restriction of~$P'$ that we are interested in, that is, the stopped process of~$P'$.
    Note that since~$P'$ is a martingale by assumption, we see that~$X$ is a martingale with respect to the natural filtration of $(\freq{t}{}{})_{t \in \N}$.

    We aim to apply \Cref{thm:martingaleConcentration} to~$X$, choosing $\varepsilon = \alpha P'_0$.
    Since the update of the \rcga changes a frequency by at most~$\frac{1}{K}$, it follows for \Cref{thm:martingaleConcentration} that $b = \frac{1}{K}$.
    It remains to estimate for all $t \in \N$ the value $\hat{v}_t$.

    Let $t \in \N$.
    We derive $\var{X_{t + 1} - X_t \given \freq{t}{i}{}}$ via a case distinction with respect to how~$t$ compares to~$V$.

    \textbf{Case $\boldsymbol{t < V}$.}
    Since the frequencies in~$X$ are not cut off by the update, we substitute $\freq{t + 1}{i}{j} = \freq{t}{i}{j} + \frac{1}{K} (\indic{x_i = j} - \indic{y_i = j})$.
    Using that the two indicator functions per update are independent and thus uncorrelated, we obtain
    \begin{align*}
        \var{X_{t + 1} - X_t \given \freq{t}{i}{}}
         & \leq \var*{\frac{1}{K}(\indic{\selectedSample{t}{1}{i} = j} - \indic{\selectedSample{t}{2}{i} = j})}                                                         \\
         & = \frac{1}{K^2}\bigl(\var{\indic{\selectedSample{t}{1}{i} = j} \given \freq{t}{i}{}} + \var{\indic{\selectedSample{t}{2}{i} = j} \given \freq{t}{i}{}}\bigr) \\
         & = \frac{2}{K^2} \freq{t}{i}{j} (1 - \freq{t}{i}{j})                                                                                                          \\
         & \leq \frac{2}{K^2} \min \set{\freq{t}{i}{j}, 1 - \freq{t}{i}{j}}.
    \end{align*}
    Since we consider a~$t$ such that $\freq{t}{i}{j} \in \bigl[(1 - \alpha) P'_0, (1 + \alpha) P'_0\bigr]$, we have $\var{X_{t + 1} - X_t \given \freq{t}{i}{}} \leq \frac{2}{K^2} \beta$.

    \textbf{Case $\boldsymbol{t \geq V}$.}
    Since~$X$ no longer changes once $t \geq V$, it follows that $\var{X_{t + 1} - X_t \given \freq{t}{i}{}} = 0$.

    \textbf{Concluding the first case.}
    Combining the two cases, we see for all $t \in \N$ that $\hat{v}_t \leq t \frac{2}{K^2} \beta$.
    Applying \Cref{thm:martingaleConcentration} yields for all $t \in \N$ that
    \begin{equation*}
        \prob[\big]{\exists s \in [0 .. t]\colon \abs*{P'_s - P'_0} \geq \alpha P'_0 \given P'_0}
        \leq 2 \exp\left(-\frac{(\alpha P'_0)^2}{2 \cdot 2 t \beta / K^2 + 2 \alpha P'_0 / (3 K)}\right) .
    \end{equation*}
    Simplifying the exponent on the right-hand side yields
    \begin{equation*}
        -\frac{(\alpha P'_0)^2}{2 \cdot 2 t \beta / K^2 + 2 \alpha P'_0 / (3 K)}
        \leq -\frac{3 (\alpha P'_0 K)^2}{4 \cdot \max \{6 t \beta, \alpha P'_0 K\}} ,
    \end{equation*}
    which concludes the proof, as the statement is the same for~$P'$ and for~$X$.
\end{proof}

\contributionRandomWalkStepsKappastar*

\begin{proof}
    We aim at applying \Cref{thm:rcgaNeutralConcentration} to $(P_t)_{t \in \N} \coloneqq \bigl(\mu_{i \mid \randomWalkStoppingTime{i}{t'}[]}^{(t' + t)}(S_{\nu})\bigr)_{t \in \N}$ with $\alpha = \frac{1}{\kappa^*}$, that is, choosing in the theorem $J = S_{\nu}$.
    Note that by the definition of~$\randomWalkStoppingTime{i}{t'}[]$, we only consider random-walk steps in~$P$.
    Hence,~$P$ is a martingale.
    That is, \Cref{thm:rcgaNeutralConcentration} is directly applicable.
    This leaves us only with discussing the exponent in the probability bound in the following.

    For~$\beta$, we see that our assumption
    $\kappa^* \ge 2$ implies $\alpha \le 1/2$, so we  have  $ \frac{1}{2} P_0 \le \beta \le \frac{3}{2}P_0$
    for any value of~$P_0$.

    From the definition of~$\tstop$ and the bound on~$\beta$, we have $6 \tstop \beta \ge 3 \tstop P_0 \geq 3\cstop K \sqrt{n} (\ln n)  P_0 \ge \alpha K  P_0$ for
    sufficiently large~$n$.
    Thus, we get for the exponent of \Cref{thm:rcgaNeutralConcentration} that
    \begin{align*}
        -\frac{3 (\alpha P_0 K)^2}{4 \cdot \max \{6 \tstop \beta, \alpha P_0 K\}}
         & = -\frac{3 \alpha^2 P_0^2 K^2}{24 \tstop \beta}
        \le  -\frac{3 \alpha^2 P_0^2 K^2}{24\cstop K\sqrt{n}(\ln n)(3/2)P_0} \\
         & = -\frac{\alpha^2 P_0 K}{12\cstop \sqrt{n}\ln n}.
    \end{align*}
    We substitute~$K$ and recall that $P_0 \geq \frac{\eulerE[-3]}{r}$ and $\alpha = \frac{1}{\kappa^*} = \tfrac{1}{\lceil \log_{3/2}(r-1)\rceil} \ge \tfrac{1}{1.6\log_{3/2}(r-1)}\ge \frac{1}{4\ln r}$, where the first inequality holds for $r\ge 3$. Hence, we obtain an upper bound for the exponent of $-\frac{\cstar  \ln n}{192\eulerE[3]\cstop} \le -\frac{\cstar  \ln n}{3857\cstop}$, which concludes the proof.
\end{proof}

\subsubsection{Bounds on Increases due to Biased Steps}

\successUpperParts*

\begin{proof}
    We derive a lower bound $p_i\ge (1-b\delta )p$  that holds for all $i\le t$ with high probability and then apply the usual Chernoff bounds on independent trials with this pessimistic bound on their success probability.

    If $p_i\ge (1-b \delta)p$ for all $i\le t$, then by standard Chernoff bounds
    with success probability $(1-b \delta)p$ and expectation $t(1-b \delta)p$, we have
    \begin{align*}
        \prob{Z_t\le (1-\delta) tp} & \le
        \prob{Z_t \le (1-b \delta)(1-(1-b)\delta)tp}                               \\
                                    & \le \exp(-(1-b\delta)(1-b)^2\delta^2 tp / 2)
    \end{align*}

    We now study the event that $p_i \ge (1-b\delta) p $ for all $i\le t$.
    We note that our bound $\frac{\rho p+\eta S_i}{\rho+\eta i}$ on~$p_i$ becomes smallest for $i=t$, pessimistically assuming $Z_i \le ip$ (otherwise we already have $p_i \ge p$). If
    $Z_t\ge (1-\delta)tp$, then
    \[
        p_i \ge \frac{\rho p+\eta (1-\delta) tp}{\rho +\eta t}
        = \left(1-\frac{\delta \eta t}{\rho+\eta t}\right) p \ge
        (1-b\delta) p ,
    \]
    where the last inequality follows from our assumption $\tfrac{\eta t}{\rho+\eta t}\le b$.
    Hence, by a union bound over~$t$ values, the probability that there is an $i\le t$ where
    $p_i< (1-b\delta)p$ is at
    most $t\exp(-(1-b\delta)(1-b)^2\delta^2 tp / 2)$. Otherwise, the Chernoff bound for $Z_t$ applies, whose failure probability has
    already been subsumed in the union bound. Hence, the total failure probability is
    still bounded as before.
\end{proof}

\adequateBiasedGrowth*

\begin{proof}
    The proof essentially analyzes how biased steps
    add probability mass to either $K_\nu$ or $S_{\nu+1}$.
    In particular, we consider only biased steps that increase a frequency in~$S_\nu$.
    Typically, only a tiny fraction of the steps in the
    considered time interval is biased, so
    it is crucial to bound the total number
    of biased steps.

    First, we filter the biased steps given by \cref{eq:biased-stopping-times} from iteration~$t'$ onward, namely, $(\biasedStoppingTime{i}{t'}[t])_{t \in \N}$, to consider only those where a frequency in~$S_\nu$ is increased.
    To this end, let $(U^i_{t', t})_{t \in \N}$ be defined as $U^i_{t', 0} = \biasedStoppingTime{i}{t'}[0] = t'$ and, for all $t \in \N_{\geq 1}$, that $U^i_{t', t} = \inf\set{\biasedStoppingTime{i}{t'}[s] > U^i_{t', t - 1} \given s \in \N \land \Delta^{(\biasedStoppingTime{i}{t'}[s])}_i(S_\nu) > 0}$.
    Since we aim to restrict the amount of such biased steps, let 
    $B_t =
        \max\set{t'' \in \N \given U^i_{t', t''} \leq t' + t}$ be
    the number of these steps
    in the time interval $[t',t'+t]$
    and $B=B_{\tstop}$ be the number
    in the whole time interval considered.
    We claim that $B\le \left(1+\tfrac{1}{\kappa^*}\right)K$ holds with
    high probability, and we postpone the proof for later.

    We aim at applying \Cref{thm:success-upper-parts} to the biased steps that increase a frequency in~$S_{\nu + 1}$, considering all biased steps that increase a frequency in~$S_\nu$, that is, using \cref{eq:contribution-delta}, to
    \begin{equation*}
        (\indic{\Delta^{(U^i_{t', t})}_i(S_{\nu + 1}) > 0})_{t \in [B]}
        \eqqcolon (X_t)_{t \in [B]} .
    \end{equation*}
    To this end, we choose $\eta = \frac{1}{K}$, $\rho = (1+\frac{1}{\kappa^*}) \mu_{i}^{(t')}(S_\nu)$, and recall $p = \frac{1-1/\kappa^*}{1+1/\kappa^*} \frac{\mu_{i }^{(t')}(S_{\nu + 1})}{\mu_{i}^{(t')}(S_\nu)}$. We note that $\rho p = (1-\frac{1}{\kappa^*} ) \mu_{i}^{(t')}(S_{\nu + 1})$, and we recall that $\biasedStoppingTime{i}{t'}[0] = t'$ by definition.

    In order to apply \Cref{thm:success-upper-parts}, we show for all $t \in [B]$ that \cref{eq:success-upper-parts:condition} holds.
    Using the same notation of~$Z$ from \Cref{thm:success-upper-parts}, by the definition of~$X$, which only considers points in time where we increase a frequency in~$S_\nu$, we obtain for all $t \in [0 .. B]$ that
    \begin{equation}
        \label{lem:adequate-biased-growth:actual-update-probability}
        \prob{X_{t + 1} = 1 \given Z_{t + 1}, U^i_{t', t}}
        = \frac{\mu_{i}^{(t' + U^i_{t', t})}(S_{\nu + 1})}{\mu_{i}^{(t' + U^i_{t', t})}(S_\nu)} .
    \end{equation}
    We bound \cref{lem:adequate-biased-growth:actual-update-probability} in the following from below by bounding the numerator and denominator separately.


    \textbf{Bounding the numerator from below.}
    The contribution to the probability mass in~$S_{\nu + 1}$ does not only consist of the increase via biased steps but also of the updates via (typically
    much more frequent) random-walk steps, which can decrease the probability mass.
    We account for the decrease via random-walk steps by \Cref{lem:contribution-random-walk-steps-kappastar}.
    Since we assume that $\mu^{(t')}_i(S_{\nu + 1}) \ge (1-1/\kappa^*)^{3\kappa^*-3}/r \geq \frac{\eulerE[-3]}{r}$ and that $\tstop \leq \cstop K \sqrt{n} \ln n$, \Cref{lem:contribution-random-walk-steps-kappastar} is applicable and yields that genetic drift contributes to the frequencies in~$S_{\nu + 1}$ at least a value of $-\frac{1}{\kappa^*} \mu_{i \mid \randomWalkStoppingTime{i}{t'}[]}^{(t')}(S_{\nu + 1})$ with probability at least $1 - 2 n^{-\cstar / (3857 \cstop)}$.

    Since the biased steps cannot decrease the frequencies in~$S_{\nu + 1}$, and since $U^i_{t', 0} = \biasedStoppingTime{i}{t'}[0]$, we get for all $t \in [0 .. \tstop]$ with probability at least $1 - 2 n^{-\cstar / (3857 \cstop)}$ that
    \begin{equation*}
        \mu_{i}^{(t' + U^i_{t', t})}(S_{\nu + 1})
        \geq \left(1-\frac{1}{\kappa^*}\right) \mu_{i \mid U^i_{t'}}^{(t')}(S_{\nu + 1}) + \frac{Z_{t + 1}}{K}
        = \rho p + \eta Z_{t + 1} .
    \end{equation*}

    \textbf{Bounding the numerator from above.}
    Similarly to the previous case, the probability mass in~$S_\nu$ is updated via random-walk steps and increased via biased steps.
    By applying \Cref{lem:contribution-random-walk-steps-kappastar}, we obtain that the increase via random-walk steps is at most $\frac{1}{\kappa^*}$ of the starting value, with probability at least $1 - 2 n^{-\cstar / (3857 \cstop)}$.

    Moreover, each biased update in~$S_\nu$ increases the probability mass by~$\frac{1}{K}$.
    Thus, for all $t \in [0 .. \tstop]$, we have
    \begin{equation*}
        \mu_{i}^{(t' + U^i_{t', t})}(S_\nu)
        \leq \left(1+\frac{1}{\kappa^*}\right) \mu_{i \mid U^i_{t'}}^{(t')}(S_\nu) + \frac{t + 1}{K}
        = \rho + \eta (t + 1) .
    \end{equation*}

    \textbf{Bounding the ratio
        of mass at time~$t$.}
    The two inequalities above show that \cref{eq:success-upper-parts:condition} is satisfied. Recalling that we are aiming at a statement holding for all time steps $t'+t\in[t'..t'+\tstop]$, we
    distinguish two cases according
    to the number~$B_t$ of biased steps increasing
    $S_{\nu}$ in that time interval.

    \textbf{Case 1: $B_t\ge  (1-1/\kappa^*)^{3(\kappa^*-1)} \frac{K}{\kappa^* r}$.} In this case,
    $B_t$ is so large that
    \Cref{thm:success-upper-parts}
    yields sufficient concentration for the probability mass moved
    into $S_{\nu+1}$.


    Using the notation from \Cref{thm:success-upper-parts}, we choose $\delta = \frac{1}{\kappa^*}$ as well as $b = 1-\frac{1}{2\eulerE[3]r}$, noting that, due to our assumptions
    $B_t \le B \leq (1+\frac{1}{\kappa^*})K$ and $\mu_{i}^{(t')}(S_\nu) \ge (1-1/\kappa^*)^{3\kappa^*-3}/r \geq \frac{\eulerE[-3]}{r}$, we obtain
    \begin{equation*}
        b
        = 1 - \frac{1}{2 \eulerE[3] r }
        \geq 1 - \frac{1}{1 + \eulerE[3] r}
        \ge 1 - \frac{1}{1 + \eta B_t/\rho}
        = \frac{\eta B_t}{\rho + \eta B_t} .
    \end{equation*}

    Hence, by applying \Cref{thm:success-upper-parts}, we have that $Z_{B_t + 1} \leq (1 - \delta) B_t p = (1-\frac{1}{\kappa^*}) B_tp$ only occurs with probability at most
    \begin{align*}
        B_t \exp\left(-\frac{1}{2} (1 - b \delta) (1 - b)^2 \delta^2 B_t p\right)
         & = B_t \exp\left(- \bigOmega*{\frac{\delta^2 B_t p}{r^2 }}\right) \\ & = B_t \exp\left(- \bigOmega*{\frac{\delta^2 K p}{r^3\ln r }}\right)
    \end{align*}
    by our assumption $B_t\ge (1-1/\kappa^*)^{3(\kappa^*-1)} \frac{K}{\kappa^* r}  = \Omega(K/(r\ln r))$. Further using that
    $r=O(n^{1/6-\epsilon})$, $K=\Omega(\sqrt{n})$, $B_t=\mathrm{poly}(n)$,
    and $\delta=\Omega(1/\!\ln n)$, the
    bound on the failure probability is at most $\exp(-\Omega(n^{\epsilon/2}p))$.

    \textbf{Concluding under the current assumptions.}
    Under the likely event $Z_{B_t + 1} \geq (1 - \delta) B_t p$, we
    now relate the final probability masses in $S_{\nu+1}$ and
    $S_\nu$ to each other to prove the statement of the theorem.
    The event states
    that at least $(1-1/\kappa^*)B_t p$ of the $B_t$ specific biased steps in the
    interval $[t'..t'+t]$ add
    probability mass to~$S_{\nu+1}$. We assume this to happen and   pessimistically let
    all other biased steps increase the mass of $S_{\nu}$. Hence,
    we have at time~$t$ that
    \[
        \mu_{i}^{(t'+t)}(S_{\nu + 1})
        \geq \left(1-\frac{1}{\kappa^*}\right) \mu_{i}^{(t')}(S_{\nu + 1}) + \frac{(1-1/\kappa^*)B_t p}{K}
    \]
    and
    \[
        \mu_{i}^{(t'+t)}(S_{\nu})
        \leq \left(1+\frac{1}{\kappa^*}\right) \mu_{i}^{(t')}(S_{\nu}) + \frac{B_t(1-(1-1/\kappa^*))}{K}
        \leq \left(1+\frac{1}{\kappa^*}\right) \mu_{i}^{(t')}(S_{\nu}) + \frac{B_t}{K}.
    \]
    Recalling that $p = \frac{(1-\frac{1}{\kappa^*}) \mu_{i}^{(t')}(S_{\nu + 1})}{(1+\frac{1}{\kappa^*}) \mu_{i}^{(t')}(S_{\nu})}$,
    combining the two inequalities, we obtain
    \[
        \frac{\mu_{i}^{(t'+t)}(S_{\nu + 1})}{ \mu_{i}^{(t'+t)}(S_{\nu})}
        \ge \frac{p (1+1/\kappa^*) \mu_{i}^{(t')}(S_{\nu}) +p(1-1/\kappa^*)B_t/K}{(1+1/\kappa^*) \mu_{i}^{(t')}(S_{\nu}) + B_t/K}
        \ge p(1-1/\kappa^*).
    \]
    Again substituting $p$ in the last bound and noting that
    $\frac{1-1/\kappa^*}{1+1/\kappa^*}\ge (1-\frac{1}{\kappa^*})^2$, we finally have
    \[
        \frac{\mu_{i}^{(t'+t)}(S_{\nu + 1})}{ \mu_{i}^{(t'+t)}(S_{\nu})}
        \ge
        \left(1-\frac{1}{\kappa^*}\right)^3  \cdot
        \frac{\mu_{i}^{(t')}(S_{\nu + 1})}{ \mu_{i}^{(t')}(S_{\nu})}
    \]
    with the claimed probability, recalling that we still need to bound the probability of~$B$ not being too large, which we do at the end of this proof.

    \textbf{Case 2: $B_t <  (1-1/\kappa^*)^{3(\kappa^*-1)}$.}
    Here the number of steps is so small that Theorem~\ref{thm:success-upper-parts} does not provide
    strong enough concentration. However, then
    the few biased steps cannot significantly change the ratio of masses
    in $S_\nu$ and $S_{\nu+1}$. We pessimistically assume that
    all the $S_\nu$-increasing biased steps in the time
    interval $[t'..t'+t]$ increase $S_\nu\setminus S_{\nu+1}$. Similarly to the above but
    without applying Theorem~\ref{thm:success-upper-parts}, we
    have
    \begin{align*}
        \frac{\mu_{i}^{(t'+t)}(S_{\nu + 1})}{ \mu_{i}^{(t'+t)}(S_{\nu})}
         & \ge \frac{p (1+1/\kappa^*) \mu_{i}^{(t')}(S_{\nu}) }{(1+1/\kappa^*) \mu_{i}^{(t')}(S_{\nu}) + B_t/K}
        \ge \frac{p \mu_{i}^{(t')}(S_{\nu}) }{\mu_{i}^{(t')}(S_{\nu}) + B_t/K}                                        \\
         & \ge \frac{p \mu_{i}^{(t')}(S_{\nu}) }{\mu_{i}^{(t')}(S_{\nu}) + (1-1/\kappa^*)^{3\kappa^*-3}/(\kappa^* r)} \\
         & \ge \frac{p (1-1/\kappa^*)^{3\kappa^*-3}/r}{
            (1-1/\kappa^*)^{3\kappa^*-3}/r + (1-1/\kappa^*)^{3\kappa^*-3}/(\kappa^* r)
        } \ge
        \left(1-\frac{1}{\kappa^*}\right) p .
    \end{align*}
    using the prerequisite $\mu_i^{(t')}(S_\nu)\ge (1-1/\kappa^*)^{3\kappa^*-3}/r$ from the lemma.

    \textbf{Bounding the total number of biased steps.} We are still left with the claim that $B\le \left(1+\frac{1}{\kappa^*}\right)K$ with high probability. This is actually a consequence
    of the fact that we have bounded the number of random-walk
    steps with high probability. First, note that $\mu_i^{(t')}(K_{\psi})\le 1$, $\mu_i^{(t'+\tstop)}(K_{\psi})= 0$ by
    definition of~$\tstop$, and that
    each biased step decreases the probability
    mass in $K_{\psi}$ by $1/K$. Hence,
    the number of biased steps in the time interval $[t'..t'+\tstop]$ is bounded by $\mu_i^{(t'+\tstop)}(K_{\psi}) - \mu_i^{(t')}(K_{\psi}) \le K$, plus
    $K$ times the accumulated
    increase of probability mass in $K_{\psi}$ due to random-walk steps.
    We recall the application
    of \Cref{lem:contribution-random-walk-steps-kappastar} above
    and obtain that genetic drift increases the probability mass in $K_\psi$ by at most $\frac{1}{\kappa^*} \mu_i^{t'} (K_{\psi}) \le \frac{1}{\kappa^*}$ in the time interval
    $[t'..t'+\tstop]$, with the high probability $1-2n^{-c^*/(3857\cstop)}$ stated above.  Hence,
    there are at most $K(1+1/\kappa^*)$ biased steps with
    the stated probability.
    Adding this probability to the failure probability of the statement and taking
    a union bound over the at most $\tstop$ steps considered concludes the proof.
\end{proof}

\subsection{Combining Everything}

\rcgaOnGom*

\begin{proof}
    We make a case distinction with respect to~$r$.
    If~$r$ is constant, we apply the known result by \citet[Theorem~$4.6$]{AdakW25}, which is better than our runtime bound for such values of~$r$.
    Hence, in the following, we assume that $r = \smallOmega{1}$.

    Recalling \cref{eq:intervalHierarchy} and as mentioned before, we split the analysis into $\kappa^* = \ceil{\log_{3/2}(r - 1)}$ consecutive \emph{phases} for each position $i \in [n]$.
    At the beginning of phase $\kappa \in [0 .. \kappa^* - 1]$, all probability mass at position~$i$ is in frequencies for values in~$S_\kappa$ and none of the other frequencies, and the probability mass in~$K_\kappa$ is positive.
    Phase~$\kappa$ stops once all probability mass from~$K_\kappa$ is removed.
    We note that a phase $\kappa' \in [\kappa + 1 .. \kappa^* - 1]$ is skipped if for all $\psi \in [\kappa + 1 .. \kappa']$, the frequency mass in~$K_{\psi}$ is (also)~$0$ at the end of phase~$\kappa$.

    We assume first that the assumptions of \Cref{lem:driftBound} are met during an entire phase.
    Via the multiplicative drift theorem (\Cref{thm:multiplicative-drift-concentration-with-failure}), we show that each phase (for each position) lasts with high probability at most order $K \sqrt{n} \ln n$ iterations.
    Since each position is optimized in parallel, a union bound over all positions yields that a single phase for all positions also only takes at most order $K \sqrt{n} \ln n$ iterations.
    Since we have~$\kappa^* = \bigO{\log r}$, the total runtime is with high probability at most $\bigO{K \sqrt{n} \log(n) \log(r)}$.
    Afterward, we show via \Cref{lem:adequate-biased-growth} that the assumptions of \Cref{lem:driftBound} hold for each phase and each position, thus concluding the proof.

    We note that \Cref{lem:driftBound} only works for phases $\kappa \in [0 .. \kappa^* - 1]$ where $\ell_\kappa \leq r - 10$, which excludes constantly many phases toward the end.
    For these phases, we apply the result by Adak and Witt (\Cref{thm:rcga-runtime-adak-witt})  directly, noting that we only consider a constant amount of frequencies with positive mass, all of which have at the beginning of each remaining phase at least a constant value.
    Although this does not imply that all these frequencies share their mass uniformly at the beginning of a phase, the result from \Cref{thm:rcga-runtime-adak-witt} is still applicable, since scaling the initial value of all involved frequencies by a constant factor does not affect the results asymptotically by more than a constant factor.
    This implies that we have the same bound for all phases.

    \textbf{Bounding the runtime for each phase for each position.}
    Consider position $i \in [n]$ and phase $\kappa \in [0 .. \kappa^* - 1]$.
    Let $\cstar \in \R_{> 0}$ be a sufficiently large constant, and let $t' \in \N$ be the random iteration in which phase~$\kappa$ starts for position~$i$.
    Assume that the assumptions of \Cref{lem:driftBound} are met for the first $\cstar K \sqrt{n} \ln n$ iterations of the phase or until the phase ends.
    To this end, let~$\mathcal{A}_{t'}$ denote the event that the assumptions of \Cref{lem:driftBound} hold for the first $\cstar K \sqrt{n} \ln n$ iterations of the phase.
    We note that we show at the end of this proof that $\prob{\mathcal{A}_{t'}} = \bigO{n^{-3}}$.
    Moreover, assume that the probability mass in~$S_{\kappa + 1}$ is at least some constant throughout this time.

    We consider the random process $(X_t)_{t \in \N} \coloneqq (1 - \sDstarP{t' + t})_{t \in \N}$.
    Let $c_1, c_2 \in \R_{> 3}$ be sufficiently large constants, and let $\xmin = \frac{1}{K}$.
    Then by \Cref{lem:driftBound}, including the fact that we consider the setting where we actually observe drift, we obtain for all $t \in \N$ before the phase ends that
    \begin{equation*}
        \expect{(X_t - X_{t + 1}) \cdot \indic{t < T \land \mathcal{A}_{t'}} \given \freq{t}{}{}} \geq \cstar \frac{1}{K \sqrt{n}} X_t \cdot \indic{t < T} \prob{\mathcal{A}_{t'} \given \freq{t}{}{}} .
    \end{equation*}
    By an application of the second statement of \Cref{thm:multiplicative-drift-concentration-with-failure} with $\gamma = c_2 \ln n$, noting that $X_0 \leq 1$ and thus $\expect{X_0 \cdot \indic{t < T}} / \xmin \leq K = \poly(n)$, we obtain that with probability at least $1 - n^{-c_2} - \bigO{n^{-3}}$, phase~$\kappa$ for position~$i$ ends in at most $c_1 K \sqrt{n} \ln n$ iterations.
    Via a union bound over all~$n$ positions, phase~$\kappa$ ends for all positions with probability at least $1 - n^{1 - c_2} - \bigO{n^{-2}}$ during the same number of iterations.

    \textbf{Showing that the assumptions for \Cref{lem:driftBound} are met.}
    Consider a position $i \in [n]$ and a phase $\kappa \in [0 .. \kappa^* - 2]$.
    Let $t' \in \N$ denote the first (random) iteration of phase~$\kappa$ for position~$i$, and for a sufficiently large constant $\cstop \in \R_{> 0}$, let $\tstop = \cstop K \sqrt{n}$ denote an upper bound on the length of this phase (that holds with high probability).
    Then $\mu_i^{(t')}(S_\kappa)=1$ and $\mu_i^{(t')}(K_\kappa) > 0$ hold by the definition of a phase.
    Assuming that $\mu^{(t')}_i(S_{\nu}) \ge (1-1/\kappa^*)^{3\kappa^*-3}/r$ also holds, which we prove below, \Cref{lem:adequate-biased-growth} yields that for a sufficiently large constants $\cstar \in \R_{> 3}$, with probability at least $1 - 3 \tstop n^{-\cstar / (3857 \cstop)}$, for the $\cstop K \sqrt{n} \ln n$ iterations following~$t'$, the ratio of cumulative probability mass in the upper frequencies is roughly the same as it is in the previous phase.

    Applying \Cref{lem:adequate-biased-growth} iteratively, assuming its conditions to be met for now, we obtain for all $\nu \in [\kappa + 1 .. \kappa^* - 1]$ and all $t \in [t' .. t' + \tstop]$, that
    \begin{equation*}
        \frac{\mu_{i}^{(t' + t)}(S_{\nu + 1})}{\mu_{i}^{(t' + t)}(S_\nu)}
        \geq \left(1 - \frac{1}{\kappa^*}\right)^{3 (\kappa^* - 1)} \frac{\mu_{i}^{(0)}(S_{\nu + 1})}{\mu_{i}^{(0)}(S_\nu)}
    \end{equation*}
    with probability at least $1 - 3 \kappa^* \tstop n^{-\cstar / (3857 \cstop)}$.

    Note that since in iteration~$0$, all frequencies are initialized to~$\frac{1}{r}$, and by \cref{eq:intervalHierarchy}, we see that $\mu_{i}^{(0)}(S_{\nu + 1}) / \mu_{i}^{(0)}(S_\nu)$ is bounded from below by
    \begin{equation*}
        \frac{r - 1 - \ell_{\nu + 1}}{r - 1 - \ell_\nu}
        = \frac{\ceil*{\left(\frac{2}{3}\right)^{\nu + 1} (r - 1)}}{\ceil*{\left(\frac{2}{3}\right)^\nu (r - 1)}}
        \geq \frac{\left(\frac{2}{3}\right)^{\nu + 1} (r - 1)}{\left(\frac{2}{3}\right)^\nu (r - 1) + 1}
        \geq \frac{2}{5} .
    \end{equation*}
    Thus, $\mu_{i}^{(t' + \tstop)}(S_{\nu + 1}) / \mu_{i}^{(t' + \tstop)}(S_\nu) \geq \frac{2}{5}$.
    This shows that the non-trivial assumption in \Cref{lem:driftBound} including the constant~$\cdrift$ is met.

    It remains to show that the assumption $\mu^{(t')}_i(S_{\nu}) \ge (1-1/\kappa^*)^{3\kappa^*-3}/r$ from \Cref{lem:adequate-biased-growth} is satisfied.
    This follows inductively from an application of \Cref{lem:adequate-biased-growth} for all preceding phases.
    The base case is satisfied since all frequencies are initialized with~$\frac{1}{r}$ by the definition of the \rcga.
    The other cases follow from \Cref{lem:contribution-random-walk-steps-kappastar}, using the induction hypothesis for the lower bound on the probability mass in~$S_\nu$, since $(1 - 1/\kappa^*)^{3(\kappa^* - 1)} \geq \eulerE[-3]$.
    Thus, since probability mass in~$S_\nu$ can never decrease due to biased steps, with probability at least $1 - 2 \kappa^* n^{-\cstar / (3857 \cstop)}$, each phase reduces the probability mass in~$S_\nu$ by at most a $1/\kappa^*$-fraction.
    This proves this claim.

    \textbf{Concluding.}
    We are left with adding up all the failure probabilities of all the cases discussed above.
    The bound of the multiplicative drift theorem holds per phase (and for all positions) with probability at least $1 - n^{1 - c_2} - \bigO{n^{-2}}$.
    Since we have at most~$\kappa^*$ phases, the probability of one phase taking longer is at most $\kappa^* (n^{1 - c_2} + \bigO{n^{-2}}) = \bigO{\frac{1}{n}}$, since~$c_2$ is sufficiently large and since $\kappa^* = \bigO{\log n}$.

    The failure probability of applying \Cref{lem:adequate-biased-growth} is per phase and per position at most $3 \kappa^* \tstop n^{-\cstar / (3857 \cstop)}$.
    A union bound over all at most~$\kappa^*$ phases and all~$n$ positions yields, since~$\cstar$ is sufficiently large and since $\tstop = \poly(n)$, a total failure probability of $\bigO{n^{-3}}$.
    Note that this failure probability also accounts for~$q$ in the application of \Cref{thm:multiplicative-drift-concentration-with-failure}.

    Last, the application of \Cref{lem:contribution-random-walk-steps-kappastar} adds a failure probability of at most $2 \kappa^* n^{-\cstar / (3857 \cstop)}$ for each phase and each position.
    Similar to the previous paragraph, this adds up to a total failure probability of $\bigO{\frac{1}{n}}$.

    Adding up these three total failure probabilities concludes the proof.
\end{proof}

\end{document}